\definecolor{c1}{HTML}{2F70AF} 
\definecolor{pink}{HTML}{747199}
\definecolor{yellow}{HTML}{cda380}
\theoremstyle{plain}
\newtheorem{theorem}{Theorem}[section]
\newtheorem{lemma}[theorem]{Lemma}
\theoremstyle{definition}
\newtheorem{definition}[theorem]{Definition}
\theoremstyle{remark}
\newcommand{\T}{\mathrm{T}}
\newcommand{\ie}{\textit{i}.\textit{e}., }
\newcommand{\eg}{\textit{e}.\textit{g}., }
\newcommand{\bst}[1]{{\textbf{\textcolor{red}{#1}}}}
\newcommand{\subbst}[1]{\textcolor{blue}{\underline{{#1}}}}
\newcommand{\scalea}[1]{\scalebox{0.78}{#1}}
\newcommand{\scaleb}[1]{\scalebox{0.8}{#1}}
\newcommand{\bpi}{\mathbf{P}}
\title{DistDF: Time-Series Forecasting Needs \\ Joint-Distribution Wasserstein Alignment}
\author{Hao Wang$^{1,2}$\quad Licheng Pan$^{1}$ \quad Yuan Lu$^{1}$ \quad Zhixuan Chu$^{3}$ \quad Xiaoxi Li$^{1}$ \quad Shuting He$^{4}$ \quad \\ 
 \textbf{Zhichao Chen}$^{5}$ \quad \textbf{Qingsong Wen$^{6}$\quad \textbf{Haoxuan Li}$^{7, \dagger}$  \quad Zhouchen Lin$^{5,8,}$}\setcounter{footnote}{1}\thanks{Corresponding authors.}\\\\
    $^1$Xiaohongshu Inc. \quad
    $^2$College of Control Science and Technology, Zhejiang University \\
    $^3$College of Computer Science and Technology, Zhejiang University \\
    $^4$School of Computing and Artificial Intelligence, Shanghai University of Finance and Economics\\
    $^5$State Key Lab of General AI, School of Intelligence Science and Technology, Peking University \\
    $^6$Squirrel AI \quad  $^7$Center for Data Science, Peking University    \\
    $^8$Institute for Artificial Intelligence, Peking University \\
}
\begin{document}
\maketitle

\begin{abstract}

Training time-series forecasting models requires aligning the conditional distribution of model forecasts with that of the label sequence. The standard direct forecast (DF) approach resorts to minimizing the conditional negative log-likelihood, typically estimated by the mean squared error. However, this estimation proves biased when the label sequence exhibits autocorrelation. In this paper, we propose DistDF, which achieves alignment by minimizing a distributional discrepancy between the conditional distributions of forecast and label sequences. Since such conditional discrepancies are difficult to estimate from finite time-series observations, we introduce a joint-distribution Wasserstein discrepancy for time-series forecasting, which provably upper bounds the conditional discrepancy of interest. The proposed discrepancy is tractable, differentiable, and readily compatible with gradient-based optimization. Extensive experiments show that DistDF improves diverse forecasting models and achieves leading performance. Code is available at \url{https://github.com/Master-PLC/DistDF}.

\end{abstract}
\section{Introduction}

Time-series forecasting, which entails predicting future values based on historical observations, plays a critical role in numerous applications~\citep{huangstorm,hu2025fintsb}, such as website traffic prediction in e-commerce~\citep{chen2023mode}, trajectory forecasting in robotics~\citep{fanlearnable}, and inferential sensing in manufacturing~\citep{10946007}.
In the era of deep learning, the development of effective forecasting models hinges on two aspects~\citep{wang2026deepauto}: \textit{(1) How to design neural architectures that serve as forecasting models?} and \textit{(2) How to design learning objectives that drive model training?} Both aspects are essential for achieving high forecasting performance.

The design of neural architectures has been extensively investigated in recent studies. A central challenge involves effectively capturing the autocorrelation structures inherent in the input sequences. To this end, a variety of neural architectures have been proposed~\citep{wu2024fully,tsinghuasurvey}. 
Recent work has focused on comparing Transformer-based models—which leverage self-attention mechanisms to model autocorrelation and scale effectively~\citep{PatchTST,itransformer,wang2024taiattentionmixer}—with linear models, which use linear projections to model autocorrelation and often achieve competitive performance with reduced complexity~\citep{FreTS,DLinear,OLinear}. These developments illustrate a rapidly evolving landscape in time-series forecasting.

In contrast, the design of learning objectives remains less explored~\citep{hu2026bridging,qiudbloss,psloss}. Current approaches typically define the learning objective as the conditional likelihood of the label sequence. In practice, this is often estimated as the mean squared error (MSE), which has been a standard objective for training forecasting models~\citep{tqnet}. However, MSE neglects the autocorrelation structure of the label sequence, leading to biased likelihood estimation~\citep{wang2025iclrfredf}. Some efforts attempt to eliminate this bias by transforming the label sequence into conditionally decorrelated components~\citep{wang2025nipstimeo1,wang2025iclrfredf}. However, existing label transformation methods cannot reliably guarantee conditional decorrelation, and the bias therefore persists. \textit{Therefore, likelihood-based methods are fundamentally limited by biased likelihood estimation that impedes model training.}

To bypass the limitations of widespread likelihood-based methods, we propose Distribution-aware Direct Forecast (DistDF), which trains forecasting models by minimizing the discrepancy between the conditional distributions of forecast and label sequences. Since directly estimating conditional discrepancies is intractable given finite time-series observations, we introduce the joint-distribution Wasserstein discrepancy for training forecasting models. It upper-bounds the conditional discrepancy of interest, enables differentiation, and can be estimated from finite time-series observations, making it well-suited for integration with gradient-based optimization of time-series forecastingmodels. 

Our main contributions are summarized as follows:
\begin{itemize}[leftmargin=*]
    \item We demonstrate a fundamental limitation in prevailing likelihood-based learning objectives for time-series forecasting: biased likelihood estimation that hampers effective model training.
    \item We propose DistDF, a training framework that aligns the conditional distributions of forecasts and labels, with a newly proposed joint-distribution Wasserstein discrepancy, ensuring the alignment of conditional distributions and admitting tractable estimation from finite time-series observations.
    \item We perform comprehensive empirical evaluations to demonstrate the effectiveness of DistDF, which enhances the performance of state-of-the-art forecasting models across diverse datasets.
\end{itemize}

\section{Preliminaries}
\subsection{Problem definition}
In this paper, we focus on the multi-step time-series forecasting problem~\citep{wu2026aurora}. In general, we adhere to standard notational conventions: uppercase bold letters (\eg $\mathbf{X}$) denote matrices, lowercase bold letters (\eg $\mathbf{x}$) denote vectors, and lowercase normal letters (\eg $x$) denote scalars. Since the autocorrelation property central to our analysis manifests independently within each variate, we adopt the univariate setting for problem formulation and analysis~\citep{PatchTST,wang2025nipstimeo1}, which generalizes naturally to the multivariate setting.

Suppose $\mathbf{s}=\left[s_1, \ldots, s_\mathrm{M}\right] \in \mathbb{R}^{\mathrm{M}}$ is a time-series of $\mathrm{M}$ chronological observations.  At time step $n$, we define the history sequence of length $\mathrm{H}$ as $\mathbf{x} = [s_{n-\mathrm{H}+1}, \ldots, s_n] \in \mathbb{R}^{\mathrm{H}}$ and the corresponding label sequence of length $\mathrm{T}$ as $\mathbf{y} = [s_{n+1}, \ldots, s_{n+\T}] \in \mathbb{R}^{\T}$. The goal of time-series forecasting is to learn a model $g: \mathbb{R}^{\mathrm{H}} \rightarrow \mathbb{R}^{\T}$ such that the  forecast sequence $\hat{\mathbf{y}}=g(\mathbf{x})$ closely approximates $\mathbf{y}$.

The development of forecasting models encompasses two principal aspects: (1) neural network architectures that effectively encode history sequences~\citep{DLinear,itransformer}, and (2) learning objectives for training neural networks~\citep{qiudbloss,wang2025iclrfredf}.
It is important to emphasize that this work focuses on the design of learning objectives rather than proposing novel architectures. Nevertheless, we provide a concise review of both aspects for contextual completeness.

\subsection{Neural network architectures in time-series forecasting}
The development of network architectures aims to model the autocorrelation pattern in history sequences to obtain informative representations~\citep{wu2024fully,qiuduet,huang2024hdmixer,huang2025timebase}. Representative architectures include recurrent neural networks~\citep{S4}, convolutional neural networks~\citep{Moderntcn}, graph neural networks~\citep{huang2023crossgnn,li2026gcgnet,liu2026astgi,ma2025less}, and Transformers~\citep{qiu2025dag,ma2025mofo,PatchTST}.
A central theme in recent literature is the comparison of Transformer and non-Transformer architectures. Transformers (\eg PatchTST \citep{PatchTST}, SRSNet \citep{wusrsnet}) demonstrate strong scalability on large datasets but often entail substantial computational cost; non-Transformer models (\eg TimeMixer \citep{wang2024timemixer}, FreTS \citep{FreTS}) offer greater computational efficiency but may be less scalable. 
Recent advances include hybrid architectures that combine Transformer and non-Transformer components for their complementary strengths~\citep{lin2024cyclenet}, as well as the integration of Fourier analysis for efficient learning~\citep{fredformer}.

\subsection{Learning objectives in time-series forecasting}

One widespread learning objective for training time-series forecastingmodels is MSE, which measures the point-wise error between the forecast and label sequences~\citep{tqnet,hutimefilter}:
\begin{equation}\label{eq:tmp}
\mathcal{L}_\mathrm{MSE}=\left\|\mathbf{y}_{|\mathbf{x}}-\hat{\mathbf{y}}_{|\mathbf{x}}\right\|^2_2=\sum_{t=1}^\mathrm{T}\left(y_{|\mathbf{x},t}-\hat{y}_{|\mathbf{x},t}\right)^2,
\end{equation}
where  $\mathbf{y}_{|\mathbf{x}}$ is the label sequence given history sequence $\mathbf{x}$, $\hat{\mathbf{y}}_{|\mathbf{x}}$ is the forecast sequence. However, the MSE objective proves biased since it overlooks the presence of label autocorrelation~\citep{wang2025iclrfredf,wang2026iclrqdf}. To mitigate this bias, alternative learning objectives are proposed. One line of work aligns the overall shape of the forecast and label sequence (\eg Dilate~\citep{le2019shape} and PS~\citep{psloss}). However, these methods often lack theoretical guarantees for achieving an unbiased objective.
Another line of work transforms labels into decorrelated components before computing point-wise error. These methods reduce bias and improve forecasting performance~\citep{wang2025nipstimeo1,wang2025iclrfredf}, showing the benefits of refining learning objectives for time-series forecasting.

\section{Methodology}
\subsection{Motivation}\label{sec:motivation}

Training time-series forecastingmodels requires aligning the conditional distribution of model-generated forecasts with that of the label sequence. To achieve this, the dominant approach minimizes the conditional negative log-likelihood of $\mathbf{y}$, typically estimated using MSE~\citep{wu2025k2vae,hu2025adaptive,ma2026phat,ma2026see}. 
However, MSE treats future steps in $\mathbf{y}$ as independent, ignoring label autocorrelation where $y_t$ depends on $y_{<t}$. This oversight renders MSE biased from the true negative log-likelihood of $\mathbf{y}$---an issue we term autocorrelation bias, formalized in Theorem~\ref{thm:bias}.

\begin{theorem}[Autocorrelation bias]
\label{thm:bias}
Suppose $\mathbf{y}_{|\mathbf{x}}\in\mathbb{R}^\mathrm{T}$ is the label sequence given $\mathbf{x}$, $\hat{\mathbf{y}}_{|\mathbf{x}}\in\mathbb{R}^\mathrm{T}$ is the forecast sequence, $\mathbf{\Sigma}_{|\mathbf{x}}\in\mathbb{R}^{\T\times\T}$ is the conditional covariance of $\mathbf{y}_{|\mathbf{x}}$.
The bias of MSE from the negative log-likelihood of the label sequence given $\mathbf{x}$ is expressed as:
\begin{equation}
    \mathrm{Bias} = \left\|\mathbf{y}_{|\mathbf{x}}-\hat{\mathbf{y}}_{|\mathbf{x}}\right\|_{\mathbf{\Sigma}_{|\mathbf{x}}^{-1}}^2 - \left\|\mathbf{y}_{|\mathbf{x}}-\hat{\mathbf{y}}_{|\mathbf{x}}\right\|_2^2.
\end{equation}
where $\|\mathbf{v}\|_{\mathbf{\Sigma}^{-1}_{|\mathbf{x}}}^2=\mathbf{v}^\top\mathbf{\Sigma}^{-1}_{|\mathbf{x}}\mathbf{v}$. It vanishes if the conditional covariance $\mathbf{\Sigma}_{|\mathbf{x}}$ is the identity matrix\footnote{The pioneering work~\citep{wang2025nipstimeo1} derives the bias from the marginal likelihood of $\mathbf{y}$ assuming it follows a Gaussian distribution. In contrast, this work clarifies that it is preferable to treat the conditional distribution $\mathbb{P}_{\mathbf{y}|\mathbf{x}}$ as Gaussian. Consequently, we derive the bias from the conditional log-likelihood of $\mathbf{y}$.}.
\end{theorem}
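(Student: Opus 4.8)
The plan is to read the bias directly off the negative log-likelihood of a Gaussian conditional model. I would begin by making the modelling assumption explicit: given the historical sequence $X$, the label $Y_{|X}$ is Gaussian with some conditional mean and conditional covariance $\Sigma_{|X}\succ0$, and the forecast model is interpreted as outputting $\hat{Y}_{|X}$ in the role of that conditional mean. Then the conditional density is $p(Y_{|X}\mid X)=(2\pi)^{-\T/2}\det(\Sigma_{|X})^{-1/2}\exp\!\big(-\tfrac12(Y_{|X}-\hat{Y}_{|X})^{\top}\Sigma_{|X}^{-1}(Y_{|X}-\hat{Y}_{|X})\big)$, and hence the conditional negative log-likelihood is $-\log p(Y_{|X}\mid X)=\tfrac12\big\|Y_{|X}-\hat{Y}_{|X}\big\|_{\Sigma_{|X}^{-1}}^{2}+\tfrac12\log\det(2\pi\Sigma_{|X})$.

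Next I would separate the forecast-dependent part from the remainder. The term $\tfrac12\log\det(2\pi\Sigma_{|X})$ is constant in $\hat{Y}_{|X}$ — the conditional covariance is a property of the data-generating process, not of the model output — so, up to this additive constant and the immaterial positive factor $\tfrac12$, the quantity a well-specified training objective should minimise is $\big\|Y_{|X}-\hat{Y}_{|X}\big\|_{\Sigma_{|X}^{-1}}^{2}$. The MSE of Eq.~\eqref{eq:tmp} is intended precisely as an estimate of this conditional negative log-likelihood, yet it instead minimises $\big\|Y_{|X}-\hat{Y}_{|X}\big\|_{2}^{2}$. Defining the bias as the gap between what MSE actually minimises and what it should minimise yields $\mathrm{Bias}=\big\|Y_{|X}-\hat{Y}_{|X}\big\|_{\Sigma_{|X}^{-1}}^{2}-\big\|Y_{|X}-\hat{Y}_{|X}\big\|_{2}^{2}$, which is the claimed identity.

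For the vanishing statement I would simply observe that if $\Sigma_{|X}=I_{\T}$ then $\|v\|_{\Sigma_{|X}^{-1}}^{2}=v^{\top}I_{\T}v=\|v\|_{2}^{2}$ for every $v\in\mathbb{R}^{\T}$, so the two terms in $\mathrm{Bias}$ coincide and $\mathrm{Bias}=0$; conversely, when $\Sigma_{|X}$ carries nontrivial structure — in particular nonzero off-diagonal entries encoding label autocorrelation — the two norms disagree on generic residuals and the bias does not vanish. I do not foresee a genuine technical obstacle; the single point I would take care to state rather than gloss over is the conditional-Gaussianity assumption, since it is what licenses treating an MSE-type loss as a negative log-likelihood in the first place, and, as the footnote to the theorem emphasises, the refinement over prior work is that this Gaussianity is placed on $\mathbb{P}(Y\mid X)$ rather than on the marginal $\mathbb{P}(Y)$.
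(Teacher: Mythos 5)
Your proposal is correct and follows essentially the same route as the paper: write the conditional Gaussian negative log-likelihood, discard the $\hat{Y}_{|X}$-independent normalizing constant (and the factor $\tfrac12$), subtract the MSE, and note that the two norms coincide when $\Sigma_{|X}=I_\T$. The only cosmetic difference is that you flag the dropped $\tfrac12$ explicitly, whereas the paper absorbs it silently when passing to its practical negative log-likelihood.
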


\begin{figure*}
\subfigure[Raw labels.]{\includegraphics[width=0.3\linewidth]{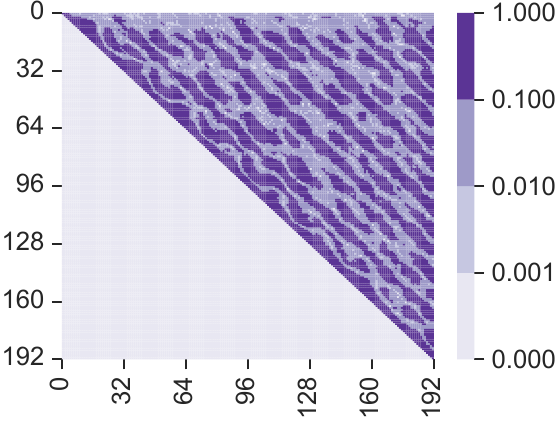}}
\hfill
    \raisebox{-0.0\height}{\rule{0.8pt}{3.4cm}} 
\hfill
\subfigure[FreDF components.]{\includegraphics[width=0.3\linewidth]{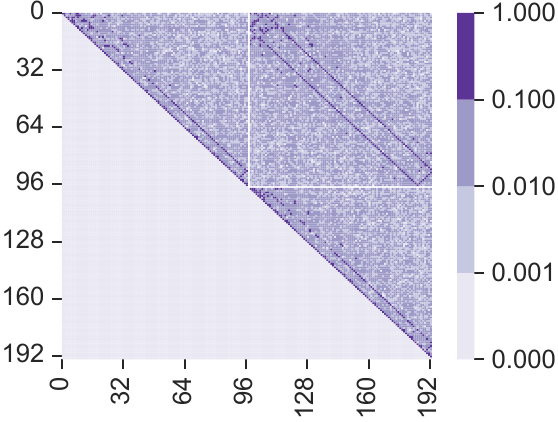}}
\hfill
    \raisebox{-0.0\height}{\rule{0.8pt}{3.4cm}} 
\hfill
\subfigure[Time-o1 components.]{\includegraphics[width=0.3\linewidth]{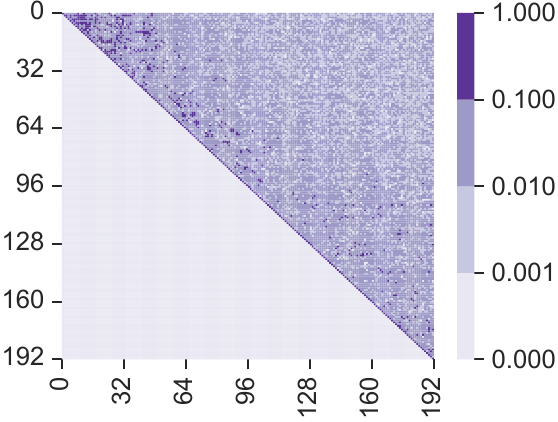}}
\caption{The conditional correlation of label components given $\mathbf{x}$, where the forecast length is set to $\mathrm{T}=192$. The correlation matrices are computed for the raw labels (a), the frequency components in FreDF (b)~\citep{wang2025iclrfredf} and the principal components in Time-o1 (c)~\citep{wang2025nipstimeo1}.}
\label{fig:auto}
\end{figure*}

Some might argue that the bias can be eliminated by first transforming the label sequence into conditionally decorrelated components and then applying MSE component-wise.  For example, \textbf{FreDF}~\citep{wang2025iclrfredf} uses Fourier transform to obtain frequency components; \textbf{Time-o1}~\citep{wang2025nipstimeo1} employs principal component analysis to obtain principal components. This strategy does eliminate the bias if the resulting components were truly conditionally decorrelated (see Theorem~\ref{thm:bias}). However, one key distinction warrants emphasis: both Fourier and principal component transformations guarantee only \textit{marginal decorrelation} of the obtained components (\ie diagonal $\mathbf{\Sigma}$), not the required \textit{conditional decorrelation} (\ie diagonal $\mathbf{\Sigma}_{|\mathbf{x}}$)\footnote{According to Theorem 3.3 \citep{wang2025iclrfredf} and Lemma 3.2 \citep{wang2025nipstimeo1}, the components obtained by Fourier and principal component transformations are marginally decorrelated.}; thus the bias persists. \textit{Hence,  likelihood-based methods struggle with biased likelihood estimation which hampers model training.}

\textbf{Case study. } We examine the Traffic dataset to demonstrate limitations of existing likelihood-based methods. As shown in \autoref{fig:auto}(a), the conditional correlation matrix reveals substantial off-diagonal values—over 50.3\% exceed 0.1—illustrating the presence of autocorrelation effects. In contrast, \autoref{fig:auto}(b-c) presents the conditional correlations of the latent components extracted by FreDF and Time-o1~\citep{wang2025iclrfredf,wang2025nipstimeo1}. Although off-diagonal magnitudes are reduced, non-negligible residual correlations persist, implying incomplete removal of autocorrelation in the transformed components. Consequently, applying point-wise losses to these components still introduces bias.

Given the substantial challenges faced by likelihood-based methods, it is worthwhile to explore alternative strategies to align conditional distributions for training forecasting models. 
One plain strategy is directly minimizing a \textit{distributional discrepancy between the conditional distributions}~\citep{otda}, which can effectively achieve alignment while bypassing the complexity of likelihood estimation. This perspective raises two key questions warranting investigation: \textit{How to devise a discrepancy to align two conditional distributions? Does it improve forecasting performance?}

\subsection{Aligning conditional distributions via joint-distribution balancing}

In this section, we align the conditional distributions $\mathbb{P}_{\hat{\mathbf{y}}|\mathbf{x}}$ and $\mathbb{P}_{\mathbf{y}|\mathbf{x}}$ by minimizing a discrepancy metric between them. As in general distribution alignment tasks, the choice of discrepancy metric is crucial~\citep{otda}. 
We select the Wasserstein discrepancy from optimal transport theory, which measures the minimum cost of transporting one distribution to another. Its rigorous theoretical properties and proven empirical success make it a principled choice for this work~\citep{cot}. An informal definition is provided in Definition~\ref{def:kantoro}.

\begin{definition}[Wasserstein discrepancy]\label{def:kantoro}
    Let $\boldsymbol{\alpha}$ and $\boldsymbol{\beta}$ be random variables with probability distributions $\mathbb{P}_{\boldsymbol{\alpha}}$ and $\mathbb{P}_{\boldsymbol{\beta}}$, and let $\mathcal{S}_{\boldsymbol{\alpha}} =[\alpha_{1},\ldots,\alpha_{\mathrm{n}}]$ and $\mathcal{S}_{\boldsymbol{\beta}} =[\beta_{1},\ldots,\beta_{\mathrm{m}}]$ be empirical samples from $\mathbb{P}_{\boldsymbol{\alpha}}$ and $\mathbb{P}_{\boldsymbol{\beta}}$. The optimization problem below seeks a feasible plan 
    $\bpi\in\mathbb{R}_{+}^{\mathrm{n}\times \mathrm{m}}$ 
    to transport $\boldsymbol{\alpha}$ to $\boldsymbol{\beta}$ at the minimum cost:
    \begin{equation}\label{eq:kanto}
    \begin{aligned}
    \mathcal{W}_p(\mathbb{P}_{\boldsymbol{\alpha}},\mathbb{P}_{\boldsymbol{\beta}}) & := \min_{\bpi \in \Pi(\alpha,\beta)} \left< \mathbf{D}, \bpi \right>, \\
        \Pi(\mathbb{P}_{\boldsymbol{\alpha}},\mathbb{P}_{\boldsymbol{\beta}}) & := \left\{ 
            \begin{array}{l}
                p_{i,1}+...+p_{i,\mathrm{m}}=a_i, i=1,...,\mathrm{n}, \\
                p_{1,j}+...+p_{\mathrm{n},j}=b_j, j=1,...,\mathrm{m}, \\
                p_{i,j} \geq 0, i=1,...,\mathrm{n}, j=1,...,\mathrm{m},
            \end{array}\right.
    \end{aligned}
\end{equation}
    \textit{where $\mathcal{W}_p$ denotes the $p$-Wasserstein discrepancy;
    $\mathbf{D}\in\mathbb{R}_{+}^{\mathrm{n}\times \mathrm{m}}$ represents the pairwise distances calculated as $d_{i,j}=\left\|\alpha_i-\beta_j\right\|_p^p$; $\mathbf{a}=[a_{1},\ldots,a_\mathrm{n}]$ and $\mathbf{b}=[b_{1},\ldots,b_\mathrm{m}]$ are the weights of samples in $\boldsymbol{\alpha}$ and $\boldsymbol{\beta}$, respectively; $\mathrm{n}$ and $\mathrm{m}$ are the numbers of samples;
    $\Pi$ defines the set of constraints.}
\end{definition}

A natural approach is to minimize $\mathcal{W}_p(\mathbb{P}_{\mathbf{y}|\mathbf{x}}, \mathbb{P}_{\hat{\mathbf{y}}|\mathbf{x}})$ to align the conditional distributions. However, it faces a fundamental \textit{estimation difficulty}. For any given $\mathbf{x}$, a typical dataset often provides only a single associated label sequence $\mathbf{y}$, and the forecasting model produces only a single output $\hat{\mathbf{y}}$. Thus, the empirical sets ($\mathcal{S}_{\mathbf{y}|\mathbf{x}}$ and $\mathcal{S}_{\hat{\mathbf{y}}|\mathbf{x}}$) each contain only a single sample, which is insufficient to represent the underlying conditional distributions and renders the discrepancy uninformative. 

\begin{lemma}[\citet{conditionalwass}]\label{lem:bound}
    For any $p\geq1$, the joint-distribution Wasserstein discrepancy upper bounds the expected conditional-distribution Wasserstein discrepancy:
    \begin{equation}
        \int \mathcal{W}_p(\mathbb{P}_{\mathbf{y}|\mathbf{x}},\mathbb{P}_{\hat{\mathbf{y}}|\mathbf{x}}) d{\mathbb{P}(\mathbf{x})}  \leq\mathcal{W}_p(\mathbb{P}_{\mathbf{x},\mathbf{y}},\mathbb{P}_{\mathbf{x},\hat{\mathbf{y}}}).
    \end{equation}
    Equality holds if $p=1$ or the conditional Wasserstein term is constant with respect to $\mathbf{x}$.
\end{lemma}

To bypass this estimation difficulty\footnote{This limitation is not unique to the Wasserstein discrepancy; any distributional discrepancy metric becomes degenerate in the absence of multiple samples.}, we introduce the joint-distribution Wasserstein discrepancy for training time-series forecastingmodels~\citep{jdot}, denoted as $\mathcal{W}_p(\mathbb{P}_{\mathbf{x},\mathbf{y}},\mathbb{P}_{\mathbf{x},\hat{\mathbf{y}}})$. This proxy offers two advantages. First, it provides a provable \textbf{upper bound} on the expected conditional discrepancy (see Lemma~\ref{lem:bound}), ensuring that minimizing the joint discrepancy effectively promotes alignment of the conditional distributions of interest. Second, it is readily \textbf{estimable} from finite time-series observations: empirical sets $\mathcal{S}_{\mathbf{x},\mathbf{y}}$ and $\mathcal{S}_{\mathbf{x},\hat{\mathbf{y}}}$ can be constructed from the entire dataset, providing sufficient samples to compute an informative discrepancy.

\begin{theorem}[Alignment property]\label{thm:align}
    The conditional distributions are aligned, \ie $\mathbb{P}_{\mathbf{y}|\mathbf{x}}=\mathbb{P}_{\hat{\mathbf{y}}|\mathbf{x}}$ if the joint-distribution Wasserstein discrepancy is minimized to zero, \ie $\mathcal{W}_p(\mathbb{P}_{\mathbf{x},\mathbf{y}},\mathbb{P}_{\mathbf{x},\hat{\mathbf{y}}})=0$.
\end{theorem} 
\begin{lemma}[\citet{cot}]\label{lem:gaussian}
    Suppose $\mathbb{P}_{\mathbf{x},\mathbf{y}}$ and $\mathbb{P}_{\mathbf{x},\hat{\mathbf{y}}}$ obey Gaussian distributions $\mathcal{N}(\boldsymbol{\mu}_{\mathbf{x},\mathbf{y}},\mathbf{\Sigma}_{\mathbf{x},\mathbf{y}})$ and $\mathcal{N}(\boldsymbol{\mu}_{\mathbf{x},\hat{\mathbf{y}}},\mathbf{\Sigma}_{\mathbf{x},\hat{\mathbf{y}}})$, respectively. The squared $\mathcal{W}_2$ discrepancy can be evaluated as the Bures--Wasserstein discrepancy:
    \begin{equation}\label{eq:bw}
        \mathcal{BW}(\boldsymbol{\mu}_{\mathbf{x},\mathbf{y}}, \boldsymbol{\mu}_{\mathbf{x},\hat{\mathbf{y}}}, \mathbf{\Sigma}_{\mathbf{x},\mathbf{y}}, \mathbf{\Sigma}_{\mathbf{x},\hat{\mathbf{y}}})=\left\lVert \boldsymbol{\mu}_{\mathbf{x},\mathbf{y}} - \boldsymbol{\mu}_{\mathbf{x},\hat{\mathbf{y}}} \right\rVert^2_2 +\mathcal{B}(\mathbf{\Sigma}_{\mathbf{x},\mathbf{y}}, \mathbf{\Sigma}_{\mathbf{x},\hat{\mathbf{y}}}),
    \end{equation}
    where $\mathcal{B}(\mathbf{\Sigma}_{\mathbf{x},\mathbf{y}}, \mathbf{\Sigma}_{\mathbf{x},\hat{\mathbf{y}}})=\mathrm{Tr}\left(\mathbf{\Sigma}_{\mathbf{x},\mathbf{y}} + \mathbf{\Sigma}_{\mathbf{x},\hat{\mathbf{y}}} - 2 \sqrt{\mathbf{\Sigma}_{\mathbf{x},\mathbf{y}}^{1/2}\mathbf{\Sigma}_{\mathbf{x},\hat{\mathbf{y}}}\mathbf{\Sigma}_{\mathbf{x},\mathbf{y}}^{1/2}} \right)$, $\mathrm{Tr}(\cdot)$ denotes matrix trace.
\end{lemma}

The use of Wasserstein discrepancy for distribution alignment is inspired by the domain adaptation literature~\citep{otda}. However, one key distinction warrants emphasis. Domain adaptation primarily aligns the \textit{marginal distributions of inputs} to improve cross-domain generalization; in contrast, we align the \textit{conditional distributions} of model outputs and labels to perform multi-task learning, which represents a technically innovative strategy in multi-task supervised learning.

\subsection{Model implementation}

In this section, we implement DistDF, a framework for training time-series forecastingmodels by minimizing the joint-distribution Wasserstein discrepancy. The key steps are summarized in Algorithm~\ref{alg:framework}.

Given a batch of history sequences $\mathbf{X}\in\mathbb{R}^{\mathrm{B}\times\mathrm{H}}$ and corresponding label sequences $\mathbf{Y}\in\mathbb{R}^{\mathrm{B}\times\mathrm{T}}$, where $\mathrm{B}$ denotes batch size, the forecasting model $g$ is employed to produce forecast sequences $\hat{\mathbf{Y}}$ (step 1). We then construct two joint sequences by concatenating $\mathbf{X}$ with $\mathbf{Y}$ and $\hat{\mathbf{Y}}$ along the time axis (step 2): $\mathbf{Z}=[\mathbf{X},\mathbf{Y}]$ and $\hat{\mathbf{Z}}=[\mathbf{X},\hat{\mathbf{Y}}]$. Next, we compute their first- and second-order statistics (step 3), namely the mean vectors ($\boldsymbol{\mu}_\mathbf{Z}$, $\boldsymbol{\mu}_{\hat{\mathbf{Z}}}$) and covariance matrices ($\mathbf{\Sigma}_\mathbf{Z}$, $\mathbf{\Sigma}_{\hat{\mathbf{Z}}}$). The discrepancy $\mathcal{L}_\mathrm{Dist}$ is then evaluated via the Bures--Wasserstein metric defined in Lemma~\ref{lem:gaussian} (step 4).

\begin{algorithm}[t]
\caption{The workflow of DistDF.}
\footnotesize
\textbf{Input}: $\mathbf{X}$: a batch of history sequences, $\mathbf{Y}$: a batch of  label sequences. \\
\textbf{Parameter}: 
$\gamma$: the relative weight of the discrepancy term, $g$: the forecasting model. \\
\textbf{Output}: $\mathcal{L}_\mathrm{DistDF}$: the obtained learning objective. \\
\begin{algorithmic}[1] \label{alg:framework}
\STATE $\hat{\mathbf{Y}}\leftarrow g(\mathbf{X})$
\STATE $\mathbf{Z}\leftarrow \mathrm{concat}(\mathbf{X},\mathbf{Y})$, $\hat{\mathbf{Z}}\leftarrow \mathrm{concat}(\mathbf{X},\hat{\mathbf{Y}})$
\STATE $\boldsymbol{\mu}_\mathbf{Z} \leftarrow \mathrm{mean}(\mathbf{Z})$, $\mathbf{\Sigma}_\mathbf{Z}\leftarrow\mathrm{cov}(\mathbf{Z})$, $\boldsymbol{\mu}_{\hat{\mathbf{Z}}} \leftarrow\mathrm{mean}(\hat{\mathbf{Z}})$, $\mathbf{\Sigma}_{\hat{\mathbf{Z}}} \leftarrow\mathrm{cov}(\hat{\mathbf{Z}})$
\STATE $\mathcal{L}_\mathrm{Dist} \leftarrow \mathcal{BW}(\boldsymbol{\mu}_\mathbf{Z}, \boldsymbol{\mu}_{\hat{\mathbf{Z}}}, \mathbf{\Sigma}_\mathbf{Z}, \mathbf{\Sigma}_{\hat{\mathbf{Z}}})$
\STATE $\mathcal{L}_\mathrm{MSE} \leftarrow \|\mathbf{Y}-\hat{\mathbf{Y}}\|_2^2$
\STATE $\mathcal{L}_\mathrm{DistDF}:=\gamma\cdot \mathcal{L}_\mathrm{Dist} +(1-\gamma)\cdot \mathcal{L}_\mathrm{MSE}$
\end{algorithmic}
\end{algorithm}

The moment-based discrepancy $\mathcal{L}_\mathrm{Dist}$ discards samplewise correspondences between $\mathbf{X}$ and $\mathbf{Y}$—information critical for training forecasting models. Consequently, DistDF incorporates $\mathcal{L}_\mathrm{Dist}$ as a regularization term alongside MSE following prior works~\citep{wang2025nipstimeo1,wang2025iclrfredf}:
\begin{equation}\label{eq:obj_final}
    \mathcal{L}_\mathrm{DistDF}:=\gamma\cdot \mathcal{L}_\mathrm{Dist} +(1-\gamma)\cdot \mathcal{L}_\mathrm{MSE},
\end{equation}
where MSE preserves elementwise correspondences, $0\leq\gamma\leq1$ controls the weight of $\mathcal{L}_\mathrm{Dist}$.

By minimizing the distributional discrepancy, DistDF aligns the conditional distributions of the forecast and label sequences, thereby effectively training the forecasting model. DistDF preserves the principal benefits of the canonical DF framework~\citep{DLinear,itransformer}, such as efficient inference and multi-task learning capability. Moreover, DistDF is model-agnostic, which renders it a plug-and-play component to improve the training of different forecasting models.

\section{Experiments}
To demonstrate the efficacy of DistDF, the following aspects deserve empirical investigation:
\begin{enumerate}[leftmargin=*]
    \item \textbf{Performance:} \textit{Does DistDF perform well?} In Section~\ref{sec:compete}, we benchmark DistDF against alternative learning objectives developed for training time-series forecastingmodels.
    \item \textbf{Gain:} \textit{Why does it work?} In Section~\ref{sec:ablation}, we perform an ablative study, dissecting the individual components of DistDF and clarifying their contributions to forecasting performance.
    \item \textbf{Generality:} \textit{Does it support other models and discrepancy measures?} In Section~\ref{sec:generalize}, we examine its compatibility with various models and discrepancies, with further results in Appendix~\ref{sec:generalize_app}.
    \item \textbf{Sensitivity:} \textit{Is it sensitive to hyperparameters?} In Section~\ref{sec:hyper}, we analyze the sensitivity of DistDF to the hyperparameter $\gamma$, showing stable performance across a broad parameter range.
    \item \textbf{Efficiency:} \textit{What is its computational cost?} In Appendix~\ref{sec:comp_app}, we evaluate the running cost of DistDF across different scenarios.
\end{enumerate}

\subsection{Setup}
\paragraph{Datasets.} 
We evaluate on standard datasets for time-series forecasting, including ETT, ECL, and Weather, chronologically split into training, validation, and test sets~\citep{wang2025iclrfredf,wang2025nipstimeo1}. These datasets vary in dimensionality and temporal resolution and present unique challenges for forecasting.  We fix the history length to 96 and consider four forecast lengths ($\mathrm{T}$): 96, 192, 336, and 720. For final evaluation on the test set, the drop-last trick is disabled consistent with~\citet{qiutfb}.

\paragraph{Baselines.} As our focus is on the design of learning objectives, we consider competitive objectives for training forecasting models: (1) shape-alignment objectives, including DILATE~\citep{le2019shape} and Soft-DTW~\citep{soft-dtw}; and (2) likelihood-based objectives, including Time-o1~\citep{wang2025nipstimeo1}, Koopman~\citep{koopman}, FreDF~\citep{wang2025iclrfredf}, and DF (MSE). Implementations follow the associated official codebases.

\paragraph{Implementation.} All models are trained with the Adam optimizer~\citep{Adam}, with early stopping applied if the validation loss does not improve for three consecutive epochs. When integrating DistDF into different forecasting models, we retain benchmark hyperparameters~\citep{itransformer,fredformer}, tuning only $\gamma\in(0, 1]$ and learning rate $\eta\in[5 \times 10^{-5}, 10^{-3}]$. Notably, adjusting the learning rate is necessary to account for dataset-dependent variations in the scale and gradient dynamics of the discrepancy term. Experiments are conducted on Intel Xeon Platinum 8383C CPUs with NVIDIA RTX H100 GPUs.



\subsection{Overall performance}
\label{sec:compete}
In this section, we compare DistDF with several established time-series learning objectives, using two forecasting models TimeBridge~\citep{liu2024timebridge} and Fredformer~\citep{fredformer} as testbeds. The results are presented in \autoref{tab:loss_avg} with key observations as follows.
\begin{itemize}[leftmargin=*]
    \item {The naive MSE objective exhibits suboptimal performance.} For Fredformer, MSE (naive DF) yields the worst forecasting performance on ECL and Weather datasets. This stems from its neglect of label autocorrelation, rendering it a biased learning objective as discussed in Theorem~\ref{thm:bias}.
    \item {The shape-alignment objectives offer limited improvements over MSE.} For example, for TimeBridge, Soft-DTW outperforms DF on ECL but underperforms on other datasets. These methods rely on heuristic geometric matching, which partially accommodates the autocorrelation structure of label sequences but fails to guarantee unbiasedness as learning objectives for conditional distribution alignment~\citep{wang2026deepauto}, leading to suboptimal performance\footnote{These observations align with those of~\citet{le2019shape}.}. 
    \item {The likelihood-based objectives offer more significant improvements over MSE.} For example, FreDF and Time-o1 exhibit the best performance among baselines. This effectiveness is attributed to their reduction of autocorrelation bias relative to MSE. However, as discussed in Section~\ref{sec:motivation}, they fail to fully eliminate the bias, preventing true conditional distribution alignment.
    \item {DistDF achieves the best overall performance among all compared objectives.} By Theorem~\ref{thm:align}, it ensures unbiased alignment of conditional distributions without suffering from autocorrelation bias. This unique theoretical advantage translates into superior empirical performance across both forecasting models, yielding the lowest MSE in all investigated cases.

\end{itemize}

\begin{table}
\centering
\begin{threeparttable}
\caption{Comparative results with other objectives for time-series forecasting.}\label{tab:loss_avg}
\renewcommand{\arraystretch}{1}
\setlength{\tabcolsep}{4.6pt} 
\scriptsize
\renewcommand{\multirowsetup}{\centering}
\begin{tabular}{c|l|cc|cc|cc|cc|cc|cc|cc}
\toprule
\multicolumn{2}{l}{Loss} & 
\multicolumn{2}{c}{\textbf{DistDF}} &
\multicolumn{2}{c}{Time-o1} &
\multicolumn{2}{c}{FreDF} &
\multicolumn{2}{c}{Koopman} &
\multicolumn{2}{c}{Dilate} &
\multicolumn{2}{c}{Soft-DTW} &
\multicolumn{2}{c}{DF} \\
\cmidrule(lr){3-4} \cmidrule(lr){5-6}\cmidrule(lr){7-8} \cmidrule(lr){9-10}\cmidrule(lr){11-12}\cmidrule(lr){13-14}\cmidrule(lr){15-16}
\multicolumn{2}{l}{Metrics}  & MSE & MAE  & MSE & MAE  & MSE & MAE  & MSE & MAE  & MSE & MAE  & MSE & MAE  & MSE & MAE  \\
\toprule

\multirow{4}{*}{{\rotatebox{90}{\scaleb{TimeBridge}}}}
& ETTm1 & \bst{0.383} & \bst{0.397} & \subbst{0.383} & \subbst{0.397} & 0.386 & 0.398 & 0.460 & 0.438 & 0.387 & 0.400 & 0.395 & 0.402 & 0.387 & 0.400 \\
& ETTh1 & \bst{0.434} & \bst{0.436} & 0.439 & 0.438 & \subbst{0.439} & \subbst{0.436} & 0.459 & 0.449 & 0.464 & 0.452 & 0.452 & 0.445 & 0.442 & 0.440 \\
& ECL & \bst{0.172} & \bst{0.267} & 0.175 & 0.268 & 0.175 & \subbst{0.267} & 0.182 & 0.277 & 0.176 & 0.271 & \subbst{0.173} & 0.268 & 0.176 & 0.271 \\
& Weather & \bst{0.248} & \bst{0.275} & \subbst{0.250} & \subbst{0.275} & 0.254 & 0.276 & 0.269 & 0.293 & 0.252 & 0.277 & 0.260 & 0.280 & 0.252 & 0.277 \\
\midrule
\multirow{4}{*}{{\rotatebox{90}{\scaleb{Fredformer}}}}
& ETTm1 & \bst{0.378} & 0.394 & \subbst{0.379} & \bst{0.393} & 0.384 & \subbst{0.394} & 0.389 & 0.400 & 0.389 & 0.400 & 0.397 & 0.402 & 0.387 & 0.398 \\
& ETTh1 & \bst{0.430} & \bst{0.429} & \subbst{0.431} & \subbst{0.429} & 0.438 & 0.434 & 0.452 & 0.443 & 0.453 & 0.442 & 0.460 & 0.449 & 0.447 & 0.434 \\
& ECL & \bst{0.173} & \bst{0.266} & \subbst{0.178} & \subbst{0.270} & 0.179 & 0.272 & 0.190 & 0.282 & 0.187 & 0.280 & 0.206 & 0.298 & 0.191 & 0.284 \\
& Weather & \bst{0.255} & 0.277 & \subbst{0.255} & \bst{0.276} & 0.256 & \subbst{0.277} & 0.257 & 0.279 & 0.258 & 0.280 & 0.261 & 0.280 & 0.261 & 0.282 \\
\bottomrule
\end{tabular}
\begin{tablenotes}
\item \tiny \textit{Note}:  \bst{Bold} and \subbst{underlined} denote best and second-best results, respectively. The reported results are averaged over forecast lengths: T=96, 192, 336 and 720. {When metric values coincide up to three decimal places, \bst{Bold} indicates the numerically superior result based on full precision.}
\end{tablenotes}
\end{threeparttable}
\end{table}

\begin{figure}
\begin{center}
\subfigure[ETTm2 snapshot.]{\includegraphics[width=0.24\linewidth]{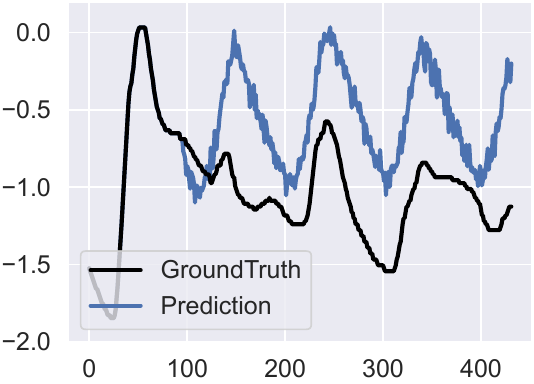}
\includegraphics[width=0.24\linewidth]{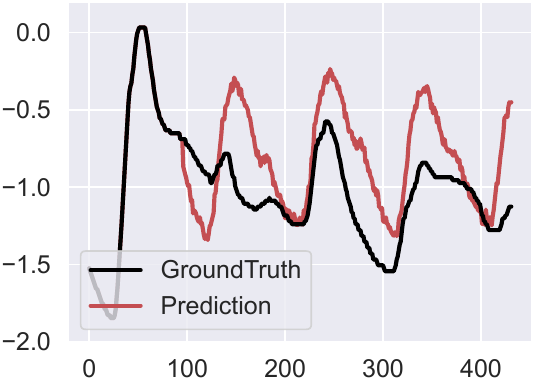}
}
\subfigure[ECL snapshot.]{\includegraphics[width=0.24\linewidth]{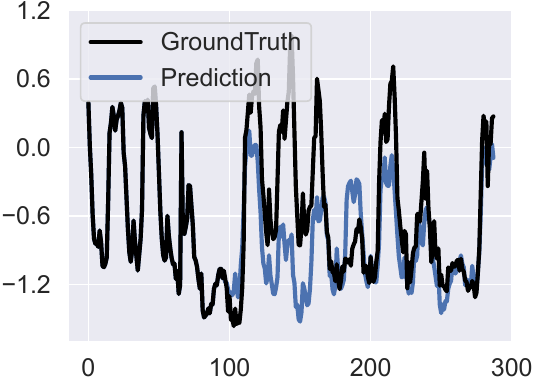}
\includegraphics[width=0.24\linewidth]{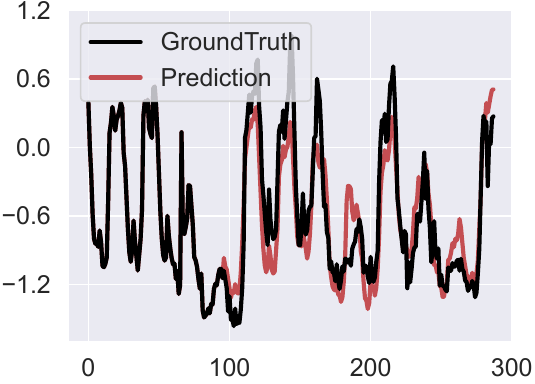}
}
\caption{The forecast sequence of DF (in blue) and DistDF (in red), with history length $\mathrm{H}=96$.}\label{fig:case}
\end{center}
\end{figure}

\paragraph{Showcases.} To further illustrate the practical benefits, we perform a qualitative comparison of DF and DistDF in \autoref{fig:case}.
While a model trained with the standard MSE objective captures the overall trend, it fails to accurately track fine-grained variations, such as rapid changes between steps 100 and 200. In contrast, DistDF produces forecasts that more precisely reflect these subtle and rapid changes, highlighting its effectiveness in improving real-world forecasting accuracy.

\subsection{Ablation studies}\label{sec:ablation}

\begin{table}
\caption{Ablation study results.}\label{tab:system_ablation_app}
\setlength{\tabcolsep}{3.7pt}
\vspace{-5pt}
\renewcommand{\arraystretch}{1}
\scriptsize
\centering
\begin{threeparttable}
\begin{tabular}{lcclccccccccccccccc}
    \toprule
    \multirow{2}{*}{Model} & \multirow{2}{*}{Align $\mu$} & \multirow{2}{*}{Align $\mathbf{\Sigma}$} &\multirow{2}{*}{Data} && \multicolumn{2}{c}{T=96} && \multicolumn{2}{c}{T=192} && \multicolumn{2}{c}{T=336} && \multicolumn{2}{c}{T=720} && \multicolumn{2}{c}{Avg} \\
    \cmidrule{6-7} \cmidrule{9-10} \cmidrule{12-13} \cmidrule{15-16} \cmidrule{18-19}
    &&&&& MSE  & MAE && MSE & MAE && MSE & MAE && MSE & MAE && MSE & MAE \\
    \midrule
    
\multirow{4}{*}{DF} & \multirow{4}{*}{\XSolidBrush} & \multirow{4}{*}{\XSolidBrush}
&   ETTm1 && 0.326 & 0.361 && 0.365 & 0.382 && 0.396 & 0.404 && 0.459 & 0.444 && 0.387 & 0.398 \\
&&& ETTh1 && 0.377 & 0.396 && 0.437 & \subbst{0.425} && 0.486 & 0.449 && 0.488 & 0.467 && 0.447 & 0.434 \\
&&& ECL && 0.142 & \subbst{0.239} && 0.161 & \subbst{0.257} && 0.182 & 0.278 && 0.217 & 0.309 && 0.176 & 0.271 \\
&&& Weather && 0.168 & 0.211 && 0.214 & 0.254 && 0.273 & 0.297 && 0.353 & \subbst{0.347} && 0.252 & 0.277 \\
\midrule

\multirow{4}{*}{DistDF$^\dagger$} & \multirow{4}{*}{\Checkmark} & \multirow{4}{*}{\XSolidBrush}
&   ETTm1 && \subbst{0.318} & \subbst{0.359} && \subbst{0.361} & \subbst{0.382} && \subbst{0.393} & \subbst{0.404} && \subbst{0.453} & \subbst{0.440} && \subbst{0.381} & \subbst{0.396} \\
&&& ETTh1 && 0.375 & \subbst{0.394} && 0.435 & 0.426 && \subbst{0.471} & \subbst{0.446} && \subbst{0.457} & \subbst{0.455} && \subbst{0.435} & \subbst{0.430} \\
&&& ECL && 0.142 & 0.239 && \subbst{0.160} & 0.257 && 0.180 & \subbst{0.273} && 0.217 & \subbst{0.307} && 0.175 & \subbst{0.269} \\
&&& Weather && 0.168 & 0.211 && \subbst{0.213} & \subbst{0.253} && 0.273 & 0.296 && \subbst{0.349} & 0.348 && \subbst{0.251} & 0.277 \\
\midrule

\multirow{4}{*}{DistDF$^\ddagger$} & \multirow{4}{*}{\XSolidBrush} & \multirow{4}{*}{\Checkmark}
&   ETTm1 && 0.328 & 0.365 && 0.364 & 0.385 && 0.395 & 0.406 && 0.457 & 0.441 && 0.386 & 0.399 \\
&&& ETTh1 && \subbst{0.374} & 0.396 && \subbst{0.430} & 0.430 && 0.476 & 0.451 && 0.476 & 0.472 && 0.439 & 0.437 \\
&&& ECL && \subbst{0.141} & 0.239 && 0.161 & 0.257 && \subbst{0.179} & 0.273 && \subbst{0.216} & 0.307 && \subbst{0.174} & 0.269 \\
&&& Weather && \subbst{0.168} & \subbst{0.211} && 0.214 & 0.253 && \subbst{0.270} & \subbst{0.296} && 0.353 & \subbst{0.347} && 0.251 & \subbst{0.277} \\
\midrule

\multirow{4}{*}{DistDF} & \multirow{4}{*}{\Checkmark} & \multirow{4}{*}{\Checkmark}
&   ETTm1 && \bst{0.316} & \bst{0.357} && \bst{0.359} & \bst{0.381} && \bst{0.392} & \bst{0.404} && \bst{0.448} & \bst{0.437} && \bst{0.379} & \bst{0.395} \\
&&& ETTh1 && \bst{0.373} & \bst{0.393} && \bst{0.428} & \bst{0.425} && \bst{0.466} & \bst{0.445} && \bst{0.453} & \bst{0.453} && \bst{0.430} & \bst{0.429} \\
&&& ECL && \bst{0.137} & \bst{0.235} && \bst{0.159} & \bst{0.257} && \bst{0.178} & \bst{0.272} && \bst{0.212} & \bst{0.302} && \bst{0.172} & \bst{0.267} \\
&&& Weather && \bst{0.164} & \bst{0.209} && \bst{0.212} & \bst{0.252} && \bst{0.270} & \bst{0.295} && \bst{0.348} & \bst{0.345} && \bst{0.248} & \bst{0.275} \\
    \bottomrule
\end{tabular}
\begin{tablenotes}
    \item \tiny \textit{Note}:  \bst{Bold} and \subbst{underlined} denote best and second-best results, respectively. {When metric values coincide up to three decimal places, \bst{Bold} indicates the numerically superior result based on full precision.}
\end{tablenotes}
\end{threeparttable}
\end{table}

In this section, we examine two components of the Wasserstein discrepancy in~\eqref{eq:bw}: mean alignment and covariance alignment. The results are presented in \autoref{tab:system_ablation_app} with key observations as follows:
\begin{itemize}[leftmargin=*]
    \item Mean alignment improves conditional distribution matching. DistDF$^\dagger$ augments DF by aligning the means of joint distributions via the mean difference term in \eqref{eq:bw}. It outperforms DF, indicating that mean alignment facilitates conditional distribution alignment between label and forecast sequences.
    \item Covariance alignment also enhances conditional distribution alignment. DistDF$^\ddagger$ enhances DF by aligning the variance of joint distributions via the $\mathcal{B}(\cdot)$ term in \eqref{eq:bw}. It also yields improvements over DF in most cases, suggesting that variance alignment facilitates conditional distribution alignment.
    \item Combining both components achieves synergistic gains. DistDF integrates both mean and covariance alignment for comprehensive joint distribution matching. It achieves the best performance, demonstrating a synergistic effect when both components are combined for alignment.
\end{itemize}

\begin{table}
\centering
\begin{threeparttable}
\caption{Comparative results with other discrepancies for aligning the joint distributions.}\label{tab:loss_avg2}

\renewcommand{\arraystretch}{1} \setlength{\tabcolsep}{6pt} \scriptsize
\renewcommand{\multirowsetup}{\centering}
\begin{tabular}{c|l|cc|cc|cc|cc|cc|cc}
    \toprule
    \multicolumn{2}{l}{Discrepancy} & 
    \multicolumn{2}{c}{\textbf{Ours}} &
    \multicolumn{2}{c}{EMD} &
    \multicolumn{2}{c}{MMD@Linear} &
    \multicolumn{2}{c}{MMD@RBF} &
    \multicolumn{2}{c}{KL} &
    \multicolumn{2}{c}{DF} \\
    \cmidrule(lr){3-4} \cmidrule(lr){5-6}\cmidrule(lr){7-8} \cmidrule(lr){9-10}\cmidrule(lr){11-12}\cmidrule(lr){13-14}
    \multicolumn{2}{l}{Metrics}  & MSE & MAE  & MSE & MAE  & MSE & MAE  & MSE & MAE  & MSE & MAE  & MSE & MAE \\
    \toprule
    
\multirow{4}{*}{{\rotatebox{90}{\scaleb{TimeBridge}}}}
& ETTm1 & \bst{0.383} & \bst{0.398} & 0.388 & 0.400 & \subbst{0.385} & 0.400 & 0.387 & \subbst{0.399} & 0.387 & 0.400 & 0.387 & 0.400 \\
& ETTh1 & \bst{0.433} & \subbst{0.437} & 0.441 & 0.439 & 0.438 & \bst{0.437} & 0.441 & 0.440 & \subbst{0.437} & 0.438 & 0.442 & 0.440 \\
& ECL & \bst{0.172} & \subbst{0.267} & 0.177 & 0.272 & 0.174 & 0.269 & \subbst{0.172} & \bst{0.266} & 0.176 & 0.271 & 0.176 & 0.271 \\
& Weather & \bst{0.248} & \bst{0.275} & 0.251 & \subbst{0.276} & 0.253 & 0.278 & \subbst{0.250} & 0.276 & 0.253 & 0.277 & 0.252 & 0.277 \\
\midrule

\multirow{4}{*}{{\rotatebox{90}{\scaleb{Fredformer}}}}
& ETTm1 & \bst{0.379} & \bst{0.395} & 0.386 & 0.397 & \subbst{0.380} & \subbst{0.395} & 0.385 & 0.397 & 0.385 & 0.397 & 0.387 & 0.398 \\
& ETTh1 & \bst{0.429} & \bst{0.431} & 0.445 & 0.435 & \subbst{0.437} & \subbst{0.432} & 0.444 & 0.435 & 0.444 & 0.435 & 0.447 & 0.434 \\
& ECL & \bst{0.183} & \bst{0.275} & 0.187 & 0.280 & 0.188 & 0.280 & 0.187 & 0.280 & \subbst{0.187} & \subbst{0.279} & 0.191 & 0.284 \\
& Weather & \bst{0.257} & \bst{0.279} & \subbst{0.261} & \subbst{0.282} & 0.262 & 0.282 & 0.262 & 0.282 & 0.261 & 0.282 & 0.261 & 0.282 \\
    \bottomrule
\end{tabular}
\begin{tablenotes}
\item  \tiny \textit{Note}:  \bst{Bold} and \subbst{underlined} denote best and second-best results, respectively. The reported results are averaged over forecast lengths: T=96, 192, 336 and 720. When metric values coincide up to three decimal places, \bst{Bold} indicates the numerically superior result based on full precision. Abbreviations: KL refers to Kullback--Leibler divergence; MMD refers to maximum mean discrepancy with different kernels; EMD refers to the earth mover's distance.
\end{tablenotes}
\end{threeparttable}
\end{table}

\subsection{Generalization studies}\label{sec:generalize}
In this section, we assess the generalizability of DistDF by implementing it with different distribution discrepancy measures and integrating it with different forecasting models.

\begin{itemize}[leftmargin=*]
    \item Firstly, we evaluate DistDF using various discrepancy measures for joint distribution alignment. As shown in \autoref{tab:loss_avg2}, all discrepancy measures yield improvements over MSE, confirming the benefit of incorporating distribution alignment in training forecasting models. The joint-distribution Wasserstein discrepancy achieves the best performance in 14 out of 16 cases, highlighting its effectiveness and reliability in distribution alignment~\citep{cot}.
    \item Secondly, we evaluate DistDF when integrated to train different forecasting models. As shown in \autoref{fig:backbone}, DistDF consistently enhances forecasting performance across all forecasting models. For example, on the ECL dataset, iTransformer and Fredformer achieve substantial MSE reductions of up to 2.7\% and 4.3\%, respectively, when augmented with DistDF. These results underscore DistDF's potential as a general, plug-and-play enhancement for a wide range of forecasting models.
\end{itemize}

\begin{figure}
\begin{center}
\subfigure[ECL dataset results.]{\includegraphics[width=0.24\linewidth]{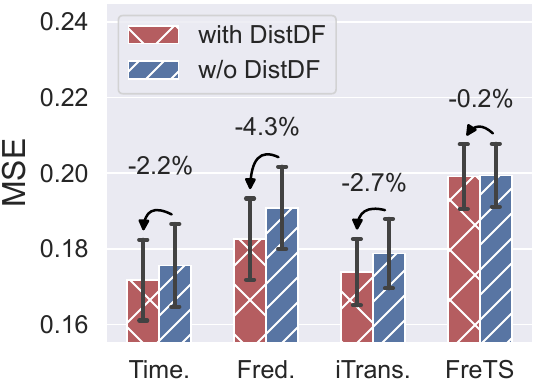}
\includegraphics[width=0.24\linewidth]{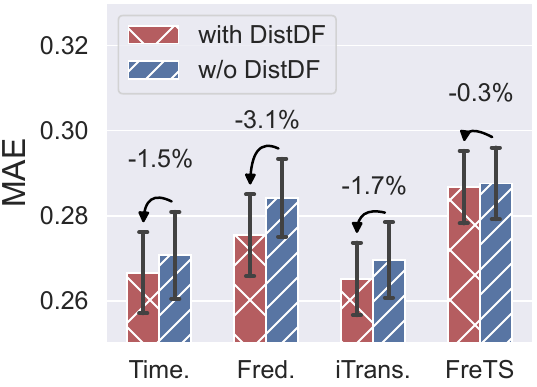}
}
\subfigure[Weather dataset results.]{\includegraphics[width=0.24\linewidth]{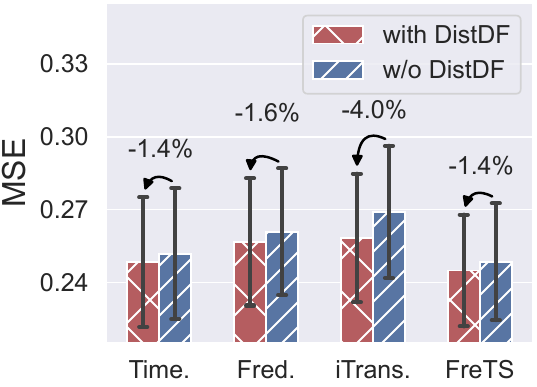}
\includegraphics[width=0.24\linewidth]{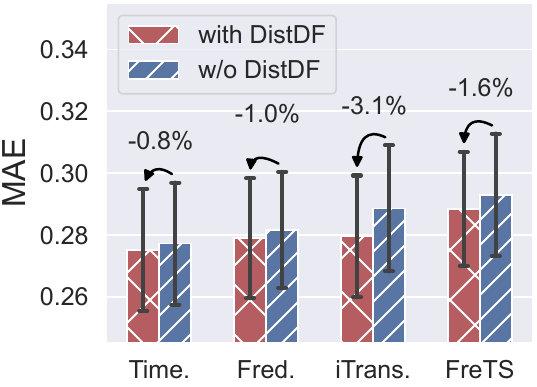}
}
\caption{Improvement of DistDF applied to different forecasting models, shown with colored bars for means over forecast lengths (96, 192, 336, 720) and error bars for 50\% confidence intervals. }
\label{fig:backbone}
\end{center}
\end{figure}

\subsection{Hyperparameter sensitivity}\label{sec:hyper}
\input{table_hparam2}
In this section, we analyze the impact of the distribution alignment weight $\gamma$ on DistDF's performance. As shown in \autoref{tab:sensi-alpha} and \autoref{tab:sensi-gamma}, increasing $\gamma$ from zero generally yields improved performance. For instance, Fredformer achieves a 0.01 reduction in MSE on the ECL dataset, demonstrating the benefit of incorporating the discrepancy term into the learning objective. Moreover, the optimal performance is typically observed when $\gamma < 1$, highlighting the complementary role of the MSE term, which is easy to optimize and effective in ensuring point-wise alignment between forecast and label sequences.

\section*{Conclusion}

In this study, we demonstrate that existing likelihood-based approaches suffer from biased likelihood estimation. Instead, we propose DistDF, which trains forecasting models by minimizing the discrepancy between the conditional distributions of forecasts and labels. Recognizing the intractability of directly estimating conditional discrepancies, we introduce a joint-distribution Wasserstein discrepancy that serves as a tractable upper bound and can be efficiently estimated from observed data. Minimizing this quantity provably aligns conditional distributions for training forecasting models. Extensive experiments show that DistDF consistently improves forecasting performance.

\textbf{Limitations.} According to Lemma~\ref{lem:gaussian}, DistDF measures distributional discrepancy via the Bures--Wasserstein discrepancy under a Gaussian assumption, yielding a tractable formulation. However, this metric captures only first- and second-order moments (mean and covariance). While sufficient for Gaussian distributions, real-world data may exhibit non-Gaussian characteristics that require higher-order moments for full characterization. Nevertheless, mean and covariance remain fundamental descriptors, and the Bures--Wasserstein discrepancy can still provide effective alignment by matching these moments. Extending DistDF to incorporate discrepancies that capture higher-order statistics while remaining computationally tractable is a promising direction for future work.

\subsubsection*{Acknowledgments}
Z. Lin was supported by the NSF China (No. 62276004), the Beijing Natural Science Foundation (No. L257007), and the Beijing Major Science and Technology Project (No. Z251100008425006).

\small
\bibliography{bib/abbr,bib/timeseries,bib/main,bib/submain,bib/supp,bib/ot}
\bibliographystyle{plainnat}

\clearpage
\appendix
\normalsize

\section{Theoretical Justification}\label{sec:theory}

\begin{theorem}[Autocorrelation bias]
Suppose $\mathbf{y}_{|\mathbf{x}}\in\mathbb{R}^\mathrm{T}$ is the label sequence given $\mathbf{x}$, $\hat{\mathbf{y}}_{|\mathbf{x}}\in\mathbb{R}^\mathrm{T}$ is the forecast sequence, $\mathbf{\Sigma}_{|\mathbf{x}}\in\mathbb{R}^{\T\times\T}$ is the conditional covariance of $\mathbf{y}_{|\mathbf{x}}$.
The bias of MSE from the negative log-likelihood of the label sequence given $\mathbf{x}$ is expressed as:
\begin{equation}
    \mathrm{Bias} = \left\|\mathbf{y}_{|\mathbf{x}}-\hat{\mathbf{y}}_{|\mathbf{x}}\right\|_{\mathbf{\Sigma}_{|\mathbf{x}}^{-1}}^2 - \left\|\mathbf{y}_{|\mathbf{x}}-\hat{\mathbf{y}}_{|\mathbf{x}}\right\|_2^2.
\end{equation}
where $\|\mathbf{v}\|_{\mathbf{\Sigma}^{-1}_{|\mathbf{x}}}^2=\mathbf{v}^\top\mathbf{\Sigma}^{-1}_{|\mathbf{x}}\mathbf{v}$. It vanishes if the conditional covariance $\mathbf{\Sigma}_{|\mathbf{x}}$ is the identity matrix.
\end{theorem}

\begin{proof}
The proof follows our previous work \citep{wang2025nipstimeo1} but highlights that it is  the conditional distribution of $\mathbf{y}$ given $\mathbf{x}$ that obeys Gaussian distribution, instead of the marginal distribution of $\mathbf{y}$.
Suppose the label sequence given $\mathbf{x}$ follows a multivariate normal distribution with mean vector $\hat{\mathbf{y}}_{|\mathbf{x}}\in\mathbb{R}^\mathrm{T}$ and covariance matrix $\mathbf{\Sigma}_{|\mathbf{x}}\in\mathbb{R}^{\mathrm{T}\times\mathrm{T}}$.
The conditional likelihood of $\mathbf{y}$ is:
\begin{equation}
    \mathbb{P}_{\mathbf{y}|\mathbf{x}}=\frac{1}{(2\pi)^\mathrm{0.5T}|\mathbf{\Sigma}_{|\mathbf{x}}|^{0.5}}\exp(-\frac{1}{2}\left\|\mathbf{y}_{|\mathbf{x}}-\hat{\mathbf{y}}_{|\mathbf{x}}\right\|_{\mathbf{\Sigma}_{|\mathbf{x}}^{-1}}^2)
\end{equation}
On the basis, the conditional negative log-likelihood of $\mathbf{y}$ is:
\begin{equation*}
    -\log \mathbb{P}_{\mathbf{y}|\mathbf{x}} = \frac{1}{2} \left( \T\log(2\pi) + \log|\mathbf{\Sigma}_{|\mathbf{x}}| + \left\|\mathbf{y}_{|\mathbf{x}}-\hat{\mathbf{y}}_{|\mathbf{x}}\right\|_{\mathbf{\Sigma}_{|\mathbf{x}}^{-1}}^2 \right).
\end{equation*}
Removing the terms unrelated to $\hat{\mathbf{y}}_{|\mathbf{x}}$, the terms used for updating $\hat{\mathbf{y}}_{|\mathbf{x}}$, namely practical negative log-likelihood (PNLL), is expressed as follows:
\begin{equation}
    \mathrm{PNLL} =  \left\|\mathbf{y}_{|\mathbf{x}}-\hat{\mathbf{y}}_{|\mathbf{x}}\right\|_{\mathbf{\Sigma}_{|\mathbf{x}}^{-1}}^2.
\end{equation}
On the other hand, the MSE loss can be expressed as:
\begin{equation}
    \mathrm{MSE} = \left\|\mathbf{y}_{|\mathbf{x}}-\hat{\mathbf{y}}_{|\mathbf{x}}\right\|_2^2.
\end{equation}
The difference between PNLL and MSE is computed as:
\begin{equation}
    \mathrm{Bias} = \left\|\mathbf{y}_{|\mathbf{x}}-\hat{\mathbf{y}}_{|\mathbf{x}}\right\|_{\mathbf{\Sigma}_{|\mathbf{x}}^{-1}}^2 - \left\|\mathbf{y}_{|\mathbf{x}}-\hat{\mathbf{y}}_{|\mathbf{x}}\right\|^2,
\end{equation}
which diminishes to zero if the label sequence is conditionally decorrelated, \ie $\mathbf{\Sigma}_{|\mathbf{x}}$ is identity matrix. The proof is completed.
\end{proof}

\begin{lemma}[Lemma~\ref{lem:bound} in the main text]
    For any $p\geq1$, the joint-distribution Wasserstein discrepancy upper bounds the expected conditional-distribution Wasserstein discrepancy:
    \begin{equation}
        \int \mathcal{W}_p(\mathbb{P}_{\mathbf{y}|\mathbf{x}},\mathbb{P}_{\hat{\mathbf{y}}|\mathbf{x}}) d{\mathbb{P}(\mathbf{x})}  \leq\mathcal{W}_p(\mathbb{P}_{\mathbf{x},\mathbf{y}},\mathbb{P}_{\mathbf{x},\mathbf{\hat{y}}}).
    \end{equation}
    where the equality holds if $p=1$ or the conditional Wasserstein term is constant with respect to $\mathbf{x}$.
\end{lemma}
\begin{proof}
    The proof can be found in Theorem 2 of \citet{conditionalwass}.
\end{proof}

\begin{theorem}[Alignment property]
    The conditional distributions are aligned, \ie $\mathbb{P}_{\mathbf{y}|\mathbf{x}}=\mathbb{P}_{\hat{\mathbf{y}}|\mathbf{x}}$ if the joint-distribution Wasserstein discrepancy is minimized to zero, \ie $\mathcal{W}_p(\mathbb{P}_{\mathbf{x},\mathbf{y}},\mathbb{P}_{\mathbf{x},\mathbf{\hat{y}}})=0$.
\end{theorem} 
\begin{proof}

    By Lemma~\ref{lem:bound}, we have
    \[
        \int \mathcal{W}_p(\mathbb{P}_{\mathbf{y}|\mathbf{x}}, \mathbb{P}_{\hat{\mathbf{y}}|\mathbf{x}})\, d\mathbb{P}(\mathbf{x}) 
        \leq \mathcal{W}_p(\mathbb{P}_{\mathbf{x},\mathbf{y}}, \mathbb{P}_{\mathbf{x},\mathbf{\hat{y}}}).
    \]
    Thus, if $\mathrm{RHS} = 0$, we have $\int \mathcal{W}_p(\mathbb{P}_{\mathbf{y}|\mathbf{x}}, \mathbb{P}_{\hat{\mathbf{y}}|\mathbf{x}})\, d\mathbb{P}(\mathbf{x}) =0$. Since $\mathcal{W}_p$ is non-negative~\citep{cot}, this implies that $\mathcal{W}_p(\mathbb{P}_{\mathbf{y}|\mathbf{x}}, \mathbb{P}_{\hat{\mathbf{y}}|\mathbf{x}}) = 0$ for almost every $\mathbf{x}$.
    Therefore, it suffices to prove that for two distributions $\mathbb{P}_{\boldsymbol{\alpha}}=\mathbb{P}_{\mathbf{y}|\mathbf{x}}$ and $\mathbb{P}_{\boldsymbol{\beta}}=\mathbb{P}_{\hat{\mathbf{y}}|\mathbf{x}}$, $\mathcal{W}_p(\mathbb{P}_{\boldsymbol{\alpha}},\mathbb{P}_{\boldsymbol{\beta}})=0$ implies $\mathbb{P}_{\boldsymbol{\alpha}}=\mathbb{P}_{\boldsymbol{\beta}}$. 
    
    Suppose $\mathcal{S}_{\boldsymbol{\alpha}}=[\alpha_1,...,\alpha_\mathrm{n}]$ and $\mathcal{S}_{\boldsymbol{\beta}}=[\beta_1,...,\beta_\mathrm{m}]$ are the empirical samples from $\mathbb{P}_{\boldsymbol{\alpha}}$ and $\mathbb{P}_{\boldsymbol{\beta}}$, respectively, with corresponding mass vectors $\mathbf{a}\in\mathbb{R}^\mathrm{n}$ and $\mathbf{b}\in\mathbb{R}^\mathrm{m}$.
    We are given that $\mathcal{W}_p(\mathbb{P}_{\boldsymbol{\alpha}},\mathbb{P}_{\boldsymbol{\beta}})=0$. By Definition~\ref{def:kantoro}, this means the minimum value of the cost function is zero. Let $\bpi^*$ be an optimal transport plan that solves the minimization problem; we have:
    \begin{equation}
    \mathcal{W}_p(\mathbb{P}_{\boldsymbol{\alpha}},\mathbb{P}_{\boldsymbol{\beta}}) = \left< \mathbf{D}, \bpi^* \right> = \sum_{i=1}^{\mathrm{n}} \sum_{j=1}^{\mathrm{m}}  p^*_{i,j} \left\|\alpha_i-\beta_j\right\|_p^p = 0.
    \end{equation}
    
    By the last constraint in \eqref{eq:kanto}, we have $ p^*_{i,j} \geq 0$. The distance term is also non-negative: $\left\|\alpha_i-\beta_j\right\|_p^p \geq 0$. Since the sum of these non-negative terms is zero, each individual term below must be zero:
    \begin{equation}
         p^*_{i,j} \left\|\alpha_i-\beta_j\right\|_p^p = 0, \quad \forall i =1,...,\mathrm{n}, j = 1,...,\mathrm{m},
    \end{equation}
    which implies that if any mass is transported from a point $\alpha_i$ to another point $\beta_j$, then the distance between the two points must be zero (\ie if $ p^*_{i,j} > 0$, then $\alpha_i = \beta_j$). That is, the optimal plan only transports mass between identical points.

    Consider an arbitrary value $z$ in the support of either distribution. The total mass at $z$, \ie probability density, for distribution $\mathbb{P}_{\boldsymbol{\alpha}}$ is $\mathbb{P}_{\boldsymbol{\alpha}}(z) = \sum_{i: \alpha_i=z} a_i$. By the constraints in \eqref{eq:kanto}, we can express this as:
    \begin{equation}
        \mathbb{P}_{\boldsymbol{\alpha}}(z) = \sum_{i: \alpha_i=z} a_i = \sum_{i: \alpha_i=z} \left( \sum_{j=1}^{\mathrm{m}}  p^*_{i,j} \right).
    \end{equation}
    As established, $ p^*_{i,j}$ can only be non-zero if $\beta_j = \alpha_i$. Therefore, for the outer sum where $\alpha_i=z$, the inner sum over $j$ is non-zero only for those indices $j$ where $\beta_j=z$. Thus, we can write:
    \begin{equation}
        \mathbb{P}_{\boldsymbol{\alpha}}(z) = \sum_{i: \alpha_i=z} \sum_{j: \beta_j=z}  p^*_{i,j}.
    \end{equation}
    Similarly, the mass at $z$ for distribution $\mathbb{P}_{\boldsymbol{\beta}}$ is $\mathbb{P}_{\boldsymbol{\beta}}(z) = \sum_{j: \beta_j=z} b_j$. By the constraints in \eqref{eq:kanto}:
    \begin{equation}
        \mathbb{P}_{\boldsymbol{\beta}}(z) = \sum_{j: \beta_j=z} b_j = \sum_{j: \beta_j=z} \left( \sum_{i=1}^{\mathrm{n}}  p^*_{i,j} \right)= \sum_{j: \beta_j=z} \sum_{i: \alpha_i=z}  p^*_{i,j},
    \end{equation}
which yields $\mathbb{P}_{\boldsymbol{\alpha}}(z) = \mathbb{P}_{\boldsymbol{\beta}}(z)$. Since this equality holds for any value $z$, the probability mass functions of $\mathbb{P}_{\boldsymbol{\alpha}}$ and $\mathbb{P}_{\boldsymbol{\beta}}$ are identical, which implies $\mathbb{P}_{\boldsymbol{\alpha}} = \mathbb{P}_{\boldsymbol{\beta}}$\footnote{A discrete probability is completely characterized by two components: its support and its probability mass function.}. 

Applying this result to our conditional distributions, $\mathcal{W}_p(\mathbb{P}_{\mathbf{y}|\mathbf{x}}, \mathbb{P}_{\hat{\mathbf{y}}|\mathbf{x}}) = 0$ implies $\mathbb{P}_{\mathbf{y}|\mathbf{x}} = \mathbb{P}_{\hat{\mathbf{y}}|\mathbf{x}}$ for almost every $\mathbf{x}$. This completes the proof.
\end{proof}

\begin{lemma}\label{lem:gaussianapp}
    Suppose $\mathbb{P}_{\mathbf{x},\mathbf{y}}$ and $\mathbb{P}_{\mathbf{x},\mathbf{\hat{y}}}$ obey Gaussian distributions $\mathcal{N}(\boldsymbol{\mu}_{\mathbf{x},\mathbf{y}},\mathbf{\Sigma}_{\mathbf{x},\mathbf{y}})$ and $\mathcal{N}(\boldsymbol{\mu}_{\mathbf{x},\mathbf{\hat{y}}},\mathbf{\Sigma}_{\mathbf{x},\mathbf{\hat{y}}})$, respectively. {The squared $\mathcal{W}_2$ discrepancy can be calculated as the Bures-Wasserstein discrepancy:}
    \begin{equation}\label{eq:bw_app}
        \mathcal{BW}(\boldsymbol{\mu}_{\mathbf{x},\mathbf{y}}, \boldsymbol{\mu}_{\mathbf{x},\mathbf{\hat{y}}}, \mathbf{\Sigma}_{\mathbf{x},\mathbf{y}}, \mathbf{\Sigma}_{\mathbf{x},\mathbf{\hat{y}}})=\left\lVert \boldsymbol{\mu}_{\mathbf{x},\mathbf{y}} - \boldsymbol{\mu}_{\mathbf{x},\mathbf{\hat{y}}} \right\rVert^2_2 +\mathcal{B}(\mathbf{\Sigma}_{\mathbf{x},\mathbf{y}}, \mathbf{\Sigma}_{\mathbf{x},\mathbf{\hat{y}}}),
    \end{equation}
    where $\mathcal{B}(\mathbf{\Sigma}_{\mathbf{x},\mathbf{y}}, \mathbf{\Sigma}_{\mathbf{x},\mathbf{\hat{y}}})=\mathrm{Tr}\left(\mathbf{\Sigma}_{\mathbf{x},\mathbf{y}} + \mathbf{\Sigma}_{\mathbf{x},\mathbf{\hat{y}}} - 2 \sqrt{\mathbf{\Sigma}_{\mathbf{x},\mathbf{y}}^{1/2}\mathbf{\Sigma}_{\mathbf{x},\mathbf{\hat{y}}}\mathbf{\Sigma}_{\mathbf{x},\mathbf{y}}^{1/2}} \right)$, $\mathrm{Tr}(\cdot)$ denotes matrix trace.
\end{lemma}
\begin{proof}
    The proof can be found in Remark 2.31 of \citet{cot}.
\end{proof}

\section{Discrete Optimal Transport and Wasserstein discrepancy}

In this section, we introduce the foundational concepts of optimal transport (OT) and the Wasserstein discrepancy. Our analysis is specifically confined to discrete probability measures, as the broader theory involving general measures is beyond the scope of this work. For a comprehensive treatment of the continuous case, readers are directed to the seminal works by \citet{cot}.

The classical framing of OT, known as the Monge problem, can be illustrated with a simple scenario: transporting goods from $\mathrm{n}$ warehouses to $\mathrm{m}$ factories \citep{cot}. Let the $i$-th warehouse hold $a_i$ units of material and the $j$-th factory require $b_j$ units. The objective is to find a transport map that moves all material from the warehouses to satisfy the factories' demands. This problem is subject to several constraints: the entire stock from each warehouse must be shipped, all factory demands must be met, and the mapping must be deterministic (i.e., each warehouse ships its entire stock to a single factory). The optimal map is the one that minimizes the total cost, which is aggregated from the cost of moving a unit of material from a given warehouse to a factory.

\begin{definition}[Monge Problem for Discrete Measures]\label{def:monge}
    Let $\boldsymbol{\alpha} = \sum_{i=1}^\mathrm{n} a_i \delta_{\alpha_i}$ and $\boldsymbol{\beta} = \sum_{j=1}^\mathrm{m} b_j \delta_{\beta_j}$ be two discrete probability measures. The Monge problem seeks a transport map $\mathbb{T}$ that pushes the mass of $\boldsymbol{\alpha}$ forward to match $\boldsymbol{\beta}$, denoted by $\mathbb{T}_\sharp \boldsymbol{\alpha} = \boldsymbol{\beta}$. This condition implies that for each $j$, the total mass received ($b_j$) must equal the sum of the masses sent from all locations mapped to it: $b_j = \sum_{ i : \mathbb{T}(\alpha_i) = \beta_j } a_i$. The objective is to find the map $\mathbb{T}$ that minimizes the total transportation cost:
    \begin{equation}\label{eq-monge-discr}
        \min_{\mathbb{T}: \mathbb{T}_\sharp \boldsymbol{\alpha} = \boldsymbol{\beta}} \left\{ \sum_{i=1}^\mathrm{n} d(\alpha_i, \mathbb{T}(\alpha_i)) a_i \right\}.
    \end{equation}
\end{definition}

While intuitive, the Monge formulation is restrictive; a solution is not guaranteed to exist, particularly when mass splitting is required (e.g., one warehouse supplying multiple factories). To address this limitation, \citet{kantorovich2006translocation} introduced a relaxed formulation. Instead of a deterministic map, Kantorovich's approach seeks a ``transport plan" that allows mass from a single source to be distributed among multiple destinations. This recasts the problem as a linear programming problem. 

\begin{definition}[Kantorovich Problem]\label{def:kantoro2}
\textit{Let $\boldsymbol{\alpha}= \sum_{i=1}^\mathrm{n} a_i \delta_{\alpha_i}$ and $\boldsymbol{\beta} = \sum_{j=1}^\mathrm{m} b_j \delta_{\beta_j}$ be two discrete probability distributions supported on samples $\{\alpha_i\}_{i=1}^\mathrm{n}$ and $\{\beta_j\}_{j=1}^\mathrm{m}$, respectively. The optimal transport problem is to find a transport plan $\bpi\in\mathbb{R}_{+}^{\mathrm{n}\times \mathrm{m}}$ that minimizes the total cost:}
\begin{equation}\label{eq:kanto2}
\begin{aligned}
    \mathcal{W}_c(\boldsymbol{\alpha},\boldsymbol{\beta}) := \min_{\bpi \in \Pi(\mathbf{a},\mathbf{b})} \langle \mathbf{D}, \bpi \rangle_F,
\end{aligned}
\end{equation}
\textit{where $\langle \cdot, \cdot \rangle_F$ is the Frobenius dot product. The cost matrix $\mathbf{D}\in\mathbb{R}_{+}^{\mathrm{n}\times \mathrm{m}}$ contains the pairwise costs, e.g., $d_{i,j} = c(\alpha_i, \beta_j)$. The set of feasible transport plans, $\Pi(\mathbf{a},\mathbf{b})$, is defined by the constraints that preserve the total mass of the source and target measures:}
\begin{equation}\label{eq:pi_constraints}
    \Pi(\mathbf{a},\mathbf{b}) := \left\{ \boldsymbol{\bpi} \in \mathbb{R}_{+}^\mathrm{n \times m} \mid \boldsymbol{\bpi}\mathbf{1}_\mathrm{m} = \mathbf{a}, \boldsymbol{\bpi}^\top\mathbf{1}_\mathrm{n} = \mathbf{b} \right\}.
\end{equation}
\textit{Here, $\mathbf{a}$ and $\mathbf{b}$ are the weight vectors for $\boldsymbol{\alpha}$ and $\boldsymbol{\beta}$.}
\end{definition}

The recent research primarily progresses along two paths. The first emphasizes computational efficiency. Exact solutions based on linear programming are often prohibitive for large-scale problems due to their $\mathcal{O}(n^3 \log n)$ complexity where $n$ denotes the number of support points~\citep{exact1}. This limitation has driven the development of approximate methods, including entropic regularization (yielding the Sinkhorn algorithm) with near-quadratic complexity~\citep{sink1}, and sliced optimal transport, which reduces the problem to one-dimensional projections and achieves near-linear complexity.
The second path focuses on adapting the OT framework to domain-specific challenges, with applications spanning domain adaptation~\citep{uot,otda}, causal inference~\citep{wang2025kddcfrpro,wang2023nipsescfr}, missing data imputation~\citep{wang2025taselsptd,wang2026tnnlspoti}, graph comparison~\citep{gromov-graph}, recommender system~\citep{liu2023joint,liu2024learning}, and generative modeling~\citep{sb,chen2024newimp}.

\section{The computation of conditional correlation}

Quantifying the autocorrelation structure of the label sequence is challenging due to the confounding effect of the history sequence~\citep{li2024icmlrelaxing,li2024nipsremoving}. Specifically, as discussed in our previous studies~\citep{wang2025iclrfredf,wang2025nipstimeo1}, dependencies among future time steps are difficult to disentangle from spurious correlations stemming from their shared dependence on the historical inputs. It renders standard measures such as the Pearson correlation coefficient unreliable.

To address this issue, we employ partial correlation to characterize the autocorrelation structure of the label sequence. It quantifies the correlation between pairs of future time steps while conditioning on the history sequence, thereby removing spurious correlations. The implementation follows standard procedures, as in MATLAB’s \texttt{partialcorr} function.\footnote{Implementation is available at \url{https://www.mathworks.com/help/stats/partialcorr.html}}


\section{More Experimental Results}\label{sec:results_app}

\subsection{Comparison with previous state-of-the-art results}\label{sec:overall_app}

\begin{table}
\caption{Comparative results with other established methods for time-series forecasting. }\label{tab:multistep_app_full}
\renewcommand{\arraystretch}{1}
\setlength{\tabcolsep}{3pt}
\scriptsize
\centering
\renewcommand{\multirowsetup}{\centering}
\begin{threeparttable}
    \begin{tabular}{c|c|cc|cc|cc|cc|cc|cc|cc|cc|cc|cc}
    \toprule
    \multicolumn{2}{l}{\multirow{2}{*}{\rotatebox{0}{\scaleb{Models}}}} & 
    \multicolumn{2}{c}{\rotatebox{0}{\scaleb{\textbf{DistDF}}}} &
    \multicolumn{2}{c}{\rotatebox{0}{\scaleb{TimeBridge}}} &
    \multicolumn{2}{c}{\rotatebox{0}{\scaleb{Fredformer}}} &
    \multicolumn{2}{c}{\rotatebox{0}{\scaleb{iTransformer}}} &
    \multicolumn{2}{c}{\rotatebox{0}{\scaleb{FreTS}}} &
    \multicolumn{2}{c}{\rotatebox{0}{\scaleb{TimesNet}}} &
    \multicolumn{2}{c}{\rotatebox{0}{\scaleb{MICN}}} &
    \multicolumn{2}{c}{\rotatebox{0}{\scaleb{TiDE}}} &
    \multicolumn{2}{c}{\rotatebox{0}{\scaleb{PatchTST}}} &
    \multicolumn{2}{c}{\rotatebox{0}{\scaleb{DLinear}}} \\
    \multicolumn{2}{c}{} &
    \multicolumn{2}{c}{\scaleb{\textbf{(Ours)}}} & 
    \multicolumn{2}{c}{\scaleb{(2025)}} & 
    \multicolumn{2}{c}{\scaleb{(2024)}} & 
    \multicolumn{2}{c}{\scaleb{(2024)}} & 
    \multicolumn{2}{c}{\scaleb{(2023)}} & 
    \multicolumn{2}{c}{\scaleb{(2023)}} &
    \multicolumn{2}{c}{\scaleb{(2023)}} &
    \multicolumn{2}{c}{\scaleb{(2023)}} &
    \multicolumn{2}{c}{\scaleb{(2023)}} &
    \multicolumn{2}{c}{\scaleb{(2023)}} \\
    \cmidrule(lr){3-4} \cmidrule(lr){5-6}\cmidrule(lr){7-8} \cmidrule(lr){9-10}\cmidrule(lr){11-12} \cmidrule(lr){13-14} \cmidrule(lr){15-16} \cmidrule(lr){17-18} \cmidrule(lr){19-20} \cmidrule(lr){21-22}
    \multicolumn{2}{l}{\rotatebox{0}{\scaleb{Metrics}}}  & \scalea{MSE} & \scalea{MAE}  & \scalea{MSE} & \scalea{MAE}  & \scalea{MSE} & \scalea{MAE}  & \scalea{MSE} & \scalea{MAE}  & \scalea{MSE} & \scalea{MAE} & \scalea{MSE} & \scalea{MAE} & \scalea{MSE} & \scalea{MAE} & \scalea{MSE} & \scalea{MAE} & \scalea{MSE} & \scalea{MAE} & \scalea{MSE} & \scalea{MAE} \\
    \toprule
    
\multirow{5}{*}{{\rotatebox{90}{\scalebox{0.95}{ETTm1}}}}
& 96 & \scalea{\bst{0.316}} & \scalea{\bst{0.357}}& \scalea{0.323} & \scalea{\subbst{0.361}}& \scalea{0.326} & \scalea{0.361}& \scalea{0.338} & \scalea{0.372}& \scalea{0.342} & \scalea{0.375}& \scalea{0.368} & \scalea{0.394}& \scalea{\subbst{0.319}} & \scalea{0.366}& \scalea{0.353} & \scalea{0.374}& \scalea{0.325} & \scalea{0.364}& \scalea{0.346} & \scalea{0.373} \\
& 192 & \scalea{\bst{0.358}} & \scalea{\bst{0.380}}& \scalea{0.366} & \scalea{0.385}& \scalea{0.365} & \scalea{\subbst{0.382}}& \scalea{0.382} & \scalea{0.396}& \scalea{0.385} & \scalea{0.400}& \scalea{0.406} & \scalea{0.409}& \scalea{0.364} & \scalea{0.395}& \scalea{0.391} & \scalea{0.393}& \scalea{\subbst{0.363}} & \scalea{0.383}& \scalea{0.380} & \scalea{0.390} \\
& 336 & \scalea{\bst{0.392}} & \scalea{\bst{0.404}}& \scalea{0.398} & \scalea{0.408}& \scalea{0.396} & \scalea{\subbst{0.404}}& \scalea{0.427} & \scalea{0.424}& \scalea{0.416} & \scalea{0.421}& \scalea{0.454} & \scalea{0.444}& \scalea{\subbst{0.395}} & \scalea{0.425}& \scalea{0.423} & \scalea{0.414}& \scalea{0.404} & \scalea{0.413}& \scalea{0.413} & \scalea{0.414} \\
& 720 & \scalea{\bst{0.448}} & \scalea{\bst{0.437}}& \scalea{0.461} & \scalea{0.445}& \scalea{\subbst{0.459}} & \scalea{0.444}& \scalea{0.496} & \scalea{0.463}& \scalea{0.513} & \scalea{0.489}& \scalea{0.527} & \scalea{0.474}& \scalea{0.505} & \scalea{0.499}& \scalea{0.486} & \scalea{0.448}& \scalea{0.463} & \scalea{\subbst{0.442}}& \scalea{0.472} & \scalea{0.450} \\
\cmidrule(lr){2-22}
& Avg & \scalea{\bst{0.378}} & \scalea{\bst{0.394}}& \scalea{0.387} & \scalea{0.400}& \scalea{\subbst{0.387}} & \scalea{\subbst{0.398}}& \scalea{0.411} & \scalea{0.414}& \scalea{0.414} & \scalea{0.421}& \scalea{0.438} & \scalea{0.430}& \scalea{0.396} & \scalea{0.421}& \scalea{0.413} & \scalea{0.407}& \scalea{0.389} & \scalea{0.400}& \scalea{0.403} & \scalea{0.407} \\
\midrule

\multirow{5}{*}{{\rotatebox{90}{\scalebox{0.95}{ETTm2}}}}
& 96 & \scalea{\bst{0.174}} & \scalea{\bst{0.256}}& \scalea{\subbst{0.177}} & \scalea{\subbst{0.259}}& \scalea{0.177} & \scalea{0.260}& \scalea{0.182} & \scalea{0.265}& \scalea{0.188} & \scalea{0.279}& \scalea{0.184} & \scalea{0.262}& \scalea{0.178} & \scalea{0.277}& \scalea{0.182} & \scalea{0.265}& \scalea{0.180} & \scalea{0.266}& \scalea{0.188} & \scalea{0.283} \\
& 192 & \scalea{\bst{0.239}} & \scalea{\bst{0.298}}& \scalea{0.243} & \scalea{0.303}& \scalea{\subbst{0.242}} & \scalea{\subbst{0.300}}& \scalea{0.257} & \scalea{0.315}& \scalea{0.264} & \scalea{0.329}& \scalea{0.257} & \scalea{0.308}& \scalea{0.266} & \scalea{0.343}& \scalea{0.247} & \scalea{0.304}& \scalea{0.285} & \scalea{0.339}& \scalea{0.280} & \scalea{0.356} \\
& 336 & \scalea{\subbst{0.300}} & \scalea{\bst{0.338}}& \scalea{0.303} & \scalea{0.343}& \scalea{0.302} & \scalea{\subbst{0.340}}& \scalea{0.320} & \scalea{0.354}& \scalea{0.322} & \scalea{0.369}& \scalea{0.315} & \scalea{0.345}& \scalea{\bst{0.299}} & \scalea{0.354}& \scalea{0.307} & \scalea{0.343}& \scalea{0.309} & \scalea{0.347}& \scalea{0.375} & \scalea{0.420} \\
& 720 & \scalea{\bst{0.397}} & \scalea{\bst{0.394}}& \scalea{0.401} & \scalea{0.399}& \scalea{\subbst{0.399}} & \scalea{\subbst{0.397}}& \scalea{0.423} & \scalea{0.411}& \scalea{0.489} & \scalea{0.482}& \scalea{0.452} & \scalea{0.421}& \scalea{0.489} & \scalea{0.482}& \scalea{0.408} & \scalea{0.398}& \scalea{0.437} & \scalea{0.422}& \scalea{0.526} & \scalea{0.508} \\
\cmidrule(lr){2-22}
& Avg & \scalea{\bst{0.277}} & \scalea{\bst{0.321}}& \scalea{0.281} & \scalea{0.326}& \scalea{\subbst{0.280}} & \scalea{\subbst{0.324}}& \scalea{0.295} & \scalea{0.336}& \scalea{0.316} & \scalea{0.365}& \scalea{0.302} & \scalea{0.334}& \scalea{0.308} & \scalea{0.364}& \scalea{0.286} & \scalea{0.328}& \scalea{0.303} & \scalea{0.344}& \scalea{0.342} & \scalea{0.392} \\
\midrule

\multirow{5}{*}{{\rotatebox{90}{\scalebox{0.95}{ETTh1}}}}
& 96 & \scalea{\bst{0.373}} & \scalea{\bst{0.393}}& \scalea{\subbst{0.373}} & \scalea{0.395}& \scalea{0.377} & \scalea{0.396}& \scalea{0.385} & \scalea{0.405}& \scalea{0.398} & \scalea{0.409}& \scalea{0.399} & \scalea{0.418}& \scalea{0.381} & \scalea{0.416}& \scalea{0.387} & \scalea{\subbst{0.395}}& \scalea{0.381} & \scalea{0.400}& \scalea{0.389} & \scalea{0.404} \\
& 192 & \scalea{\subbst{0.428}} & \scalea{\subbst{0.425}}& \scalea{\bst{0.428}} & \scalea{0.426}& \scalea{0.437} & \scalea{0.425}& \scalea{0.440} & \scalea{0.437}& \scalea{0.451} & \scalea{0.442}& \scalea{0.452} & \scalea{0.451}& \scalea{0.497} & \scalea{0.489}& \scalea{0.439} & \scalea{\bst{0.425}}& \scalea{0.450} & \scalea{0.443}& \scalea{0.442} & \scalea{0.440} \\
& 336 & \scalea{\bst{0.466}} & \scalea{\bst{0.445}}& \scalea{\subbst{0.471}} & \scalea{0.451}& \scalea{0.486} & \scalea{0.449}& \scalea{0.480} & \scalea{0.457}& \scalea{0.501} & \scalea{0.472}& \scalea{0.488} & \scalea{0.469}& \scalea{0.589} & \scalea{0.555}& \scalea{0.482} & \scalea{\subbst{0.447}}& \scalea{0.501} & \scalea{0.470}& \scalea{0.488} & \scalea{0.467} \\
& 720 & \scalea{\bst{0.453}} & \scalea{\bst{0.453}}& \scalea{0.495} & \scalea{0.487}& \scalea{0.488} & \scalea{\subbst{0.467}}& \scalea{0.504} & \scalea{0.492}& \scalea{0.608} & \scalea{0.571}& \scalea{0.549} & \scalea{0.515}& \scalea{0.665} & \scalea{0.617}& \scalea{\subbst{0.484}} & \scalea{0.471}& \scalea{0.504} & \scalea{0.492}& \scalea{0.505} & \scalea{0.502} \\
\cmidrule(lr){2-22}
& Avg & \scalea{\bst{0.430}} & \scalea{\bst{0.429}}& \scalea{\subbst{0.442}} & \scalea{0.440}& \scalea{0.447} & \scalea{\subbst{0.434}}& \scalea{0.452} & \scalea{0.448}& \scalea{0.489} & \scalea{0.474}& \scalea{0.472} & \scalea{0.463}& \scalea{0.533} & \scalea{0.519}& \scalea{0.448} & \scalea{0.435}& \scalea{0.459} & \scalea{0.451}& \scalea{0.456} & \scalea{0.453} \\
\midrule

\multirow{5}{*}{{\rotatebox{90}{\scalebox{0.95}{ETTh2}}}}
& 96 & \scalea{\bst{0.287}} & \scalea{\bst{0.336}}& \scalea{0.294} & \scalea{0.344}& \scalea{0.293} & \scalea{0.344}& \scalea{0.301} & \scalea{0.349}& \scalea{0.315} & \scalea{0.374}& \scalea{0.321} & \scalea{0.358}& \scalea{0.351} & \scalea{0.398}& \scalea{\subbst{0.291}} & \scalea{\subbst{0.340}}& \scalea{0.299} & \scalea{0.349}& \scalea{0.330} & \scalea{0.383} \\
& 192 & \scalea{\bst{0.358}} & \scalea{\bst{0.381}}& \scalea{\subbst{0.371}} & \scalea{0.394}& \scalea{0.372} & \scalea{\subbst{0.391}}& \scalea{0.383} & \scalea{0.397}& \scalea{0.466} & \scalea{0.467}& \scalea{0.418} & \scalea{0.417}& \scalea{0.492} & \scalea{0.489}& \scalea{0.376} & \scalea{0.392}& \scalea{0.383} & \scalea{0.404}& \scalea{0.439} & \scalea{0.450} \\
& 336 & \scalea{\bst{0.408}} & \scalea{\bst{0.421}}& \scalea{0.421} & \scalea{0.429}& \scalea{0.420} & \scalea{0.433}& \scalea{0.425} & \scalea{0.432}& \scalea{0.522} & \scalea{0.502}& \scalea{0.464} & \scalea{0.454}& \scalea{0.656} & \scalea{0.582}& \scalea{\subbst{0.417}} & \scalea{\subbst{0.427}}& \scalea{0.439} & \scalea{0.444}& \scalea{0.589} & \scalea{0.538} \\
& 720 & \scalea{\bst{0.416}} & \scalea{\bst{0.435}}& \scalea{0.423} & \scalea{0.443}& \scalea{\subbst{0.421}} & \scalea{\subbst{0.439}}& \scalea{0.436} & \scalea{0.448}& \scalea{0.792} & \scalea{0.643}& \scalea{0.434} & \scalea{0.450}& \scalea{0.981} & \scalea{0.718}& \scalea{0.429} & \scalea{0.446}& \scalea{0.438} & \scalea{0.455}& \scalea{0.757} & \scalea{0.626} \\
\cmidrule(lr){2-22}
& Avg & \scalea{\bst{0.367}} & \scalea{\bst{0.393}}& \scalea{0.377} & \scalea{0.403}& \scalea{\subbst{0.377}} & \scalea{0.402}& \scalea{0.386} & \scalea{0.407}& \scalea{0.524} & \scalea{0.496}& \scalea{0.409} & \scalea{0.420}& \scalea{0.620} & \scalea{0.546}& \scalea{0.378} & \scalea{\subbst{0.401}}& \scalea{0.390} & \scalea{0.413}& \scalea{0.529} & \scalea{0.499} \\
\midrule

\multirow{5}{*}{{\rotatebox{90}{\scalebox{0.95}{ECL}}}}
& 96 & \scalea{\bst{0.137}} & \scalea{\bst{0.235}}& \scalea{\subbst{0.142}} & \scalea{\subbst{0.239}}& \scalea{0.161} & \scalea{0.258}& \scalea{0.150} & \scalea{0.242}& \scalea{0.180} & \scalea{0.266}& \scalea{0.170} & \scalea{0.272}& \scalea{0.170} & \scalea{0.281}& \scalea{0.197} & \scalea{0.274}& \scalea{0.170} & \scalea{0.264}& \scalea{0.197} & \scalea{0.282} \\
& 192 & \scalea{\bst{0.159}} & \scalea{\bst{0.257}}& \scalea{\subbst{0.161}} & \scalea{\subbst{0.257}}& \scalea{0.174} & \scalea{0.269}& \scalea{0.168} & \scalea{0.259}& \scalea{0.184} & \scalea{0.272}& \scalea{0.183} & \scalea{0.282}& \scalea{0.185} & \scalea{0.297}& \scalea{0.197} & \scalea{0.277}& \scalea{0.179} & \scalea{0.273}& \scalea{0.197} & \scalea{0.286} \\
& 336 & \scalea{\bst{0.178}} & \scalea{\bst{0.272}}& \scalea{0.182} & \scalea{0.278}& \scalea{0.194} & \scalea{0.290}& \scalea{\subbst{0.182}} & \scalea{\subbst{0.274}}& \scalea{0.199} & \scalea{0.290}& \scalea{0.203} & \scalea{0.302}& \scalea{0.190} & \scalea{0.298}& \scalea{0.212} & \scalea{0.292}& \scalea{0.195} & \scalea{0.288}& \scalea{0.209} & \scalea{0.301} \\
& 720 & \scalea{\bst{0.212}} & \scalea{\bst{0.302}}& \scalea{0.217} & \scalea{0.309}& \scalea{0.235} & \scalea{0.319}& \scalea{\subbst{0.214}} & \scalea{\subbst{0.304}}& \scalea{0.234} & \scalea{0.322}& \scalea{0.294} & \scalea{0.366}& \scalea{0.221} & \scalea{0.329}& \scalea{0.254} & \scalea{0.325}& \scalea{0.234} & \scalea{0.320}& \scalea{0.245} & \scalea{0.334} \\
\cmidrule(lr){2-22}
& Avg & \scalea{\bst{0.172}} & \scalea{\bst{0.267}}& \scalea{\subbst{0.176}} & \scalea{0.271}& \scalea{0.191} & \scalea{0.284}& \scalea{0.179} & \scalea{\subbst{0.270}}& \scalea{0.199} & \scalea{0.288}& \scalea{0.212} & \scalea{0.306}& \scalea{0.192} & \scalea{0.302}& \scalea{0.215} & \scalea{0.292}& \scalea{0.195} & \scalea{0.286}& \scalea{0.212} & \scalea{0.301} \\
\midrule

\multirow{5}{*}{{\rotatebox{90}{\scalebox{0.95}{Traffic}}}}
& 96 & \scalea{\bst{0.380}} & \scalea{\bst{0.262}}& \scalea{\subbst{0.391}} & \scalea{\subbst{0.268}}& \scalea{0.461} & \scalea{0.327}& \scalea{0.397} & \scalea{0.271}& \scalea{0.531} & \scalea{0.323}& \scalea{0.590} & \scalea{0.316}& \scalea{0.498} & \scalea{0.298}& \scalea{0.646} & \scalea{0.386}& \scalea{0.444} & \scalea{0.284}& \scalea{0.649} & \scalea{0.397} \\
& 192 & \scalea{\bst{0.407}} & \scalea{\bst{0.275}}& \scalea{0.418} & \scalea{\subbst{0.276}}& \scalea{0.470} & \scalea{0.326}& \scalea{\subbst{0.416}} & \scalea{0.279}& \scalea{0.519} & \scalea{0.321}& \scalea{0.624} & \scalea{0.336}& \scalea{0.521} & \scalea{0.309}& \scalea{0.599} & \scalea{0.362}& \scalea{0.454} & \scalea{0.291}& \scalea{0.598} & \scalea{0.371} \\
& 336 & \scalea{\bst{0.429}} & \scalea{\subbst{0.284}}& \scalea{0.432} & \scalea{\bst{0.284}}& \scalea{0.492} & \scalea{0.338}& \scalea{\subbst{0.429}} & \scalea{0.286}& \scalea{0.529} & \scalea{0.327}& \scalea{0.641} & \scalea{0.345}& \scalea{0.529} & \scalea{0.314}& \scalea{0.606} & \scalea{0.363}& \scalea{0.469} & \scalea{0.298}& \scalea{0.605} & \scalea{0.373} \\
& 720 & \scalea{\bst{0.452}} & \scalea{\bst{0.297}}& \scalea{0.464} & \scalea{\subbst{0.301}}& \scalea{0.521} & \scalea{0.353}& \scalea{\subbst{0.462}} & \scalea{0.303}& \scalea{0.573} & \scalea{0.346}& \scalea{0.670} & \scalea{0.356}& \scalea{0.567} & \scalea{0.326}& \scalea{0.643} & \scalea{0.383}& \scalea{0.506} & \scalea{0.319}& \scalea{0.646} & \scalea{0.395} \\
\cmidrule(lr){2-22}
& Avg & \scalea{\bst{0.417}} & \scalea{\bst{0.279}}& \scalea{0.426} & \scalea{\subbst{0.282}}& \scalea{0.486} & \scalea{0.336}& \scalea{\subbst{0.426}} & \scalea{0.285}& \scalea{0.538} & \scalea{0.330}& \scalea{0.631} & \scalea{0.338}& \scalea{0.529} & \scalea{0.312}& \scalea{0.624} & \scalea{0.373}& \scalea{0.468} & \scalea{0.298}& \scalea{0.625} & \scalea{0.384} \\
\midrule

\multirow{5}{*}{{\rotatebox{90}{\scalebox{0.95}{Weather}}}}
& 96 & \scalea{\bst{0.164}} & \scalea{\bst{0.209}}& \scalea{\subbst{0.168}} & \scalea{0.211}& \scalea{0.180} & \scalea{0.220}& \scalea{0.171} & \scalea{\subbst{0.210}}& \scalea{0.174} & \scalea{0.228}& \scalea{0.183} & \scalea{0.229}& \scalea{0.179} & \scalea{0.244}& \scalea{0.192} & \scalea{0.232}& \scalea{0.189} & \scalea{0.230}& \scalea{0.194} & \scalea{0.253} \\
& 192 & \scalea{\bst{0.212}} & \scalea{\bst{0.252}}& \scalea{0.214} & \scalea{\subbst{0.254}}& \scalea{0.222} & \scalea{0.258}& \scalea{0.246} & \scalea{0.278}& \scalea{\subbst{0.213}} & \scalea{0.266}& \scalea{0.242} & \scalea{0.276}& \scalea{0.242} & \scalea{0.310}& \scalea{0.240} & \scalea{0.270}& \scalea{0.228} & \scalea{0.262}& \scalea{0.238} & \scalea{0.296} \\
& 336 & \scalea{\bst{0.270}} & \scalea{\bst{0.295}}& \scalea{0.273} & \scalea{\subbst{0.297}}& \scalea{0.283} & \scalea{0.301}& \scalea{0.296} & \scalea{0.313}& \scalea{\subbst{0.270}} & \scalea{0.316}& \scalea{0.293} & \scalea{0.312}& \scalea{0.273} & \scalea{0.330}& \scalea{0.292} & \scalea{0.307}& \scalea{0.288} & \scalea{0.305}& \scalea{0.282} & \scalea{0.332} \\
& 720 & \scalea{0.348} & \scalea{\bst{0.345}}& \scalea{0.353} & \scalea{\subbst{0.347}}& \scalea{0.358} & \scalea{0.348}& \scalea{0.362} & \scalea{0.353}& \scalea{\bst{0.337}} & \scalea{0.362}& \scalea{0.366} & \scalea{0.361}& \scalea{0.360} & \scalea{0.399}& \scalea{0.364} & \scalea{0.353}& \scalea{0.362} & \scalea{0.354}& \scalea{\subbst{0.347}} & \scalea{0.385} \\
\cmidrule(lr){2-22}
& Avg & \scalea{\bst{0.248}} & \scalea{\bst{0.275}}& \scalea{0.252} & \scalea{\subbst{0.277}}& \scalea{0.261} & \scalea{0.282}& \scalea{0.269} & \scalea{0.289}& \scalea{\subbst{0.249}} & \scalea{0.293}& \scalea{0.271} & \scalea{0.295}& \scalea{0.264} & \scalea{0.321}& \scalea{0.272} & \scalea{0.291}& \scalea{0.267} & \scalea{0.288}& \scalea{0.265} & \scalea{0.317} \\
\midrule

\multirow{5}{*}{{\rotatebox{90}{\scalebox{0.95}{PEMS03}}}}
& 12 & \scalea{\bst{0.068}} & \scalea{\bst{0.174}}& \scalea{\subbst{0.070}} & \scalea{\subbst{0.176}}& \scalea{0.081} & \scalea{0.191}& \scalea{0.072} & \scalea{0.179}& \scalea{0.085} & \scalea{0.198}& \scalea{0.094} & \scalea{0.201}& \scalea{0.096} & \scalea{0.217}& \scalea{0.117} & \scalea{0.226}& \scalea{0.092} & \scalea{0.210}& \scalea{0.105} & \scalea{0.220} \\
& 24 & \scalea{\bst{0.094}} & \scalea{\bst{0.205}}& \scalea{0.099} & \scalea{0.211}& \scalea{0.121} & \scalea{0.240}& \scalea{0.104} & \scalea{0.217}& \scalea{0.129} & \scalea{0.244}& \scalea{0.116} & \scalea{0.221}& \scalea{\subbst{0.095}} & \scalea{\subbst{0.210}}& \scalea{0.233} & \scalea{0.322}& \scalea{0.144} & \scalea{0.263}& \scalea{0.183} & \scalea{0.297} \\
& 36 & \scalea{\subbst{0.116}} & \scalea{\subbst{0.229}}& \scalea{0.126} & \scalea{0.240}& \scalea{0.180} & \scalea{0.292}& \scalea{0.137} & \scalea{0.251}& \scalea{0.173} & \scalea{0.286}& \scalea{0.134} & \scalea{0.237}& \scalea{\bst{0.107}} & \scalea{\bst{0.223}}& \scalea{0.379} & \scalea{0.418}& \scalea{0.200} & \scalea{0.309}& \scalea{0.258} & \scalea{0.361} \\
& 48 & \scalea{\subbst{0.138}} & \scalea{\subbst{0.252}}& \scalea{0.153} & \scalea{0.267}& \scalea{0.201} & \scalea{0.316}& \scalea{0.174} & \scalea{0.285}& \scalea{0.207} & \scalea{0.315}& \scalea{0.161} & \scalea{0.262}& \scalea{\bst{0.125}} & \scalea{\bst{0.242}}& \scalea{0.535} & \scalea{0.516}& \scalea{0.245} & \scalea{0.344}& \scalea{0.319} & \scalea{0.410} \\
\cmidrule(lr){2-22}
& Avg & \scalea{\bst{0.104}} & \scalea{\bst{0.215}}& \scalea{0.112} & \scalea{0.223}& \scalea{0.146} & \scalea{0.260}& \scalea{0.122} & \scalea{0.233}& \scalea{0.149} & \scalea{0.261}& \scalea{0.126} & \scalea{0.230}& \scalea{\subbst{0.106}} & \scalea{\subbst{0.223}}& \scalea{0.316} & \scalea{0.370}& \scalea{0.170} & \scalea{0.282}& \scalea{0.216} & \scalea{0.322} \\
\midrule

\multirow{5}{*}{{\rotatebox{90}{\scalebox{0.95}{PEMS08}}}}
& 12 & \scalea{\bst{0.076}} & \scalea{\bst{0.177}}& \scalea{\subbst{0.080}} & \scalea{\subbst{0.184}}& \scalea{0.091} & \scalea{0.199}& \scalea{0.084} & \scalea{0.187}& \scalea{0.096} & \scalea{0.205}& \scalea{0.111} & \scalea{0.208}& \scalea{0.161} & \scalea{0.274}& \scalea{0.121} & \scalea{0.233}& \scalea{0.106} & \scalea{0.223}& \scalea{0.113} & \scalea{0.225} \\
& 24 & \scalea{\bst{0.107}} & \scalea{\bst{0.210}}& \scalea{\subbst{0.119}} & \scalea{\subbst{0.224}}& \scalea{0.138} & \scalea{0.245}& \scalea{0.123} & \scalea{0.227}& \scalea{0.151} & \scalea{0.258}& \scalea{0.139} & \scalea{0.232}& \scalea{0.127} & \scalea{0.237}& \scalea{0.232} & \scalea{0.325}& \scalea{0.162} & \scalea{0.275}& \scalea{0.199} & \scalea{0.302} \\
& 36 & \scalea{\bst{0.139}} & \scalea{\bst{0.240}}& \scalea{0.159} & \scalea{0.259}& \scalea{0.199} & \scalea{0.303}& \scalea{0.170} & \scalea{0.268}& \scalea{0.203} & \scalea{0.303}& \scalea{0.168} & \scalea{0.260}& \scalea{\subbst{0.148}} & \scalea{\subbst{0.252}}& \scalea{0.376} & \scalea{0.427}& \scalea{0.234} & \scalea{0.331}& \scalea{0.295} & \scalea{0.371} \\
& 48 & \scalea{\bst{0.171}} & \scalea{\bst{0.265}}& \scalea{0.198} & \scalea{0.289}& \scalea{0.255} & \scalea{0.338}& \scalea{0.218} & \scalea{0.306}& \scalea{0.247} & \scalea{0.334}& \scalea{0.189} & \scalea{0.272}& \scalea{\subbst{0.175}} & \scalea{\subbst{0.270}}& \scalea{0.543} & \scalea{0.527}& \scalea{0.301} & \scalea{0.382}& \scalea{0.389} & \scalea{0.429} \\
\cmidrule(lr){2-22}
& Avg & \scalea{\bst{0.123}} & \scalea{\bst{0.223}}& \scalea{\subbst{0.139}} & \scalea{\subbst{0.239}}& \scalea{0.171} & \scalea{0.271}& \scalea{0.149} & \scalea{0.247}& \scalea{0.174} & \scalea{0.275}& \scalea{0.152} & \scalea{0.243}& \scalea{0.153} & \scalea{0.258}& \scalea{0.318} & \scalea{0.378}& \scalea{0.201} & \scalea{0.303}& \scalea{0.249} & \scalea{0.332} \\
\midrule

\multicolumn{2}{c|}{\scalea{{$1^{\text{st}}$ Count}}} & \scalea{\bst{40}} & \scalea{\bst{41}} & \scalea{\bst{1}} & \scalea{\bst{1}} & \scalea{0} & \scalea{0} & \scalea{0} & \scalea{0} & \scalea{\bst{1}} & \scalea{0} & \scalea{0} & \scalea{0} & \scalea{\bst{3}} & \scalea{\bst{2}} & \scalea{0} & \scalea{\bst{1}} & \scalea{0} & \scalea{0} & \scalea{0} & \scalea{0} \\
    \bottomrule
\end{tabular}
\begin{tablenotes}
    \item  \tiny \textit{Note}:  In the ``DistDF'' column, we integrate DistDF with the forecast model that achieves the best average performance on each dataset, to evaluate its ability to further advance state-of-the-art results.  We fix the input length as 96 following~\citep{itransformer}. \bst{Bold} and \subbst{underlined} denote best and second-best results, respectively. \emph{Avg} indicates average results over horizons: T=96, 192, 336 and 720.
\end{tablenotes}
\end{threeparttable}
\end{table}
Additional experimental results of comparison against established forecasting models are provided in \autoref{tab:multistep_app_full}. For each dataset, we integrate DistDF with the model achieving the best average performance,  to assess both prior state-of-the-art results and the ability of DistDF to further improve them.

\subsection{Showcase}
Additional experimental results of showcases are available in \autoref{fig:pred_app_ettm2_336} and \autoref{fig:pred_app_ecl_192}, where two datasets and two forecasting models are involved.

\begin{figure}
\begin{center}
\subfigure[Snapshot 1 with TimeBridge]{
    \includegraphics[width=0.238\linewidth]{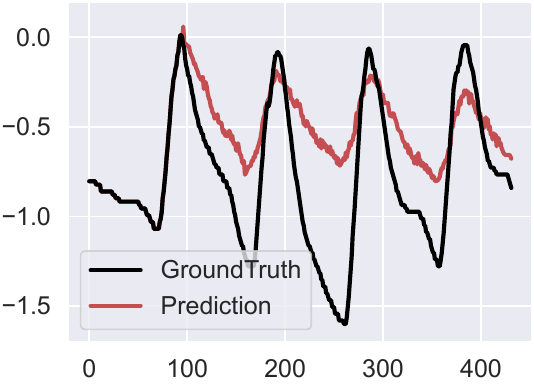}
    \includegraphics[width=0.238\linewidth]{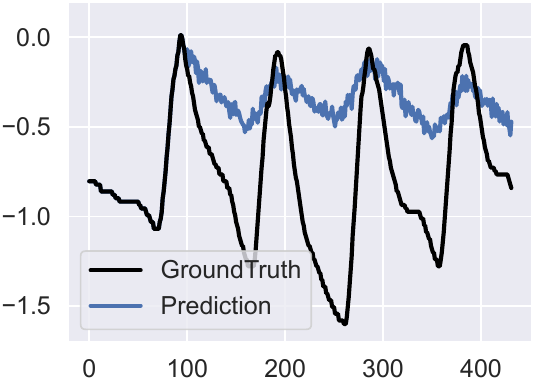}
}
\subfigure[Snapshot 1 with Fredformer]{
    \includegraphics[width=0.238\linewidth]{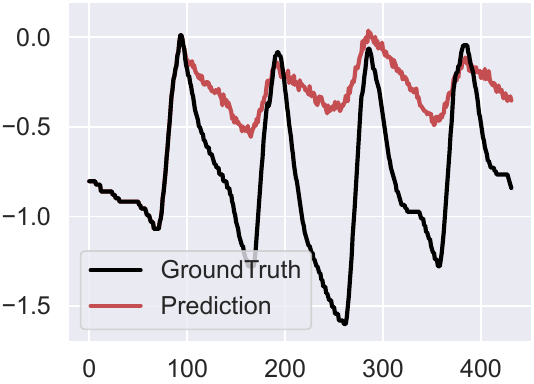}
    \includegraphics[width=0.238\linewidth]{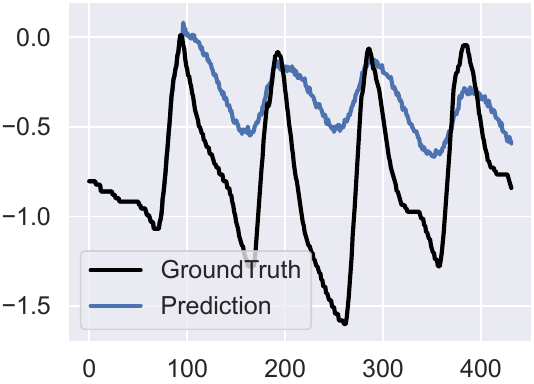}
}

\subfigure[Snapshot 2 with TimeBridge]{
    \includegraphics[width=0.238\linewidth]{fig/pred/TimeBridge_ETTm2_336_8071_w_DistDF.pdf}
    \includegraphics[width=0.238\linewidth]{fig/pred/TimeBridge_ETTm2_336_8071_wo_DistDF.pdf}
}
\subfigure[Snapshot 2 with Fredformer]{
    \includegraphics[width=0.238\linewidth]{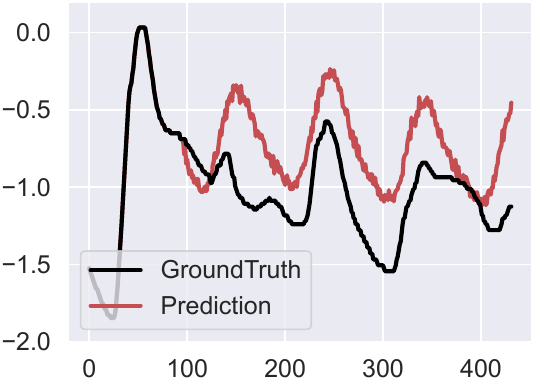}
    \includegraphics[width=0.238\linewidth]{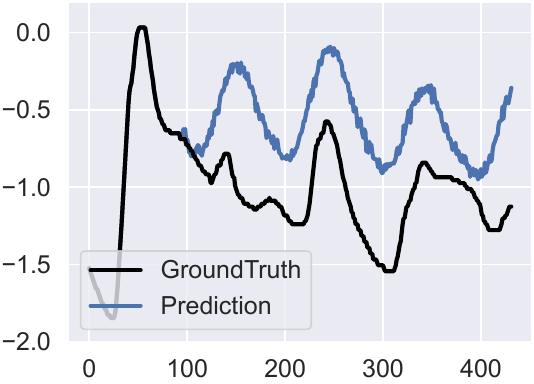}
}

\subfigure[Snapshot 3 with TimeBridge]{
    \includegraphics[width=0.238\linewidth]{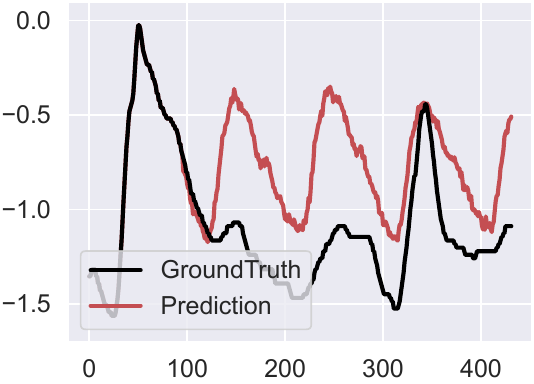}
    \includegraphics[width=0.238\linewidth]{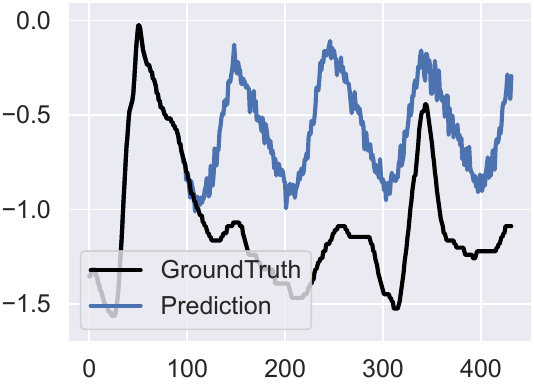}
}
\subfigure[Snapshot 3 with Fredformer]{
    \includegraphics[width=0.238\linewidth]{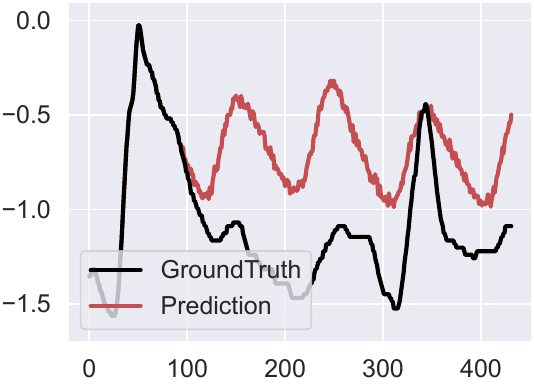}
    \includegraphics[width=0.238\linewidth]{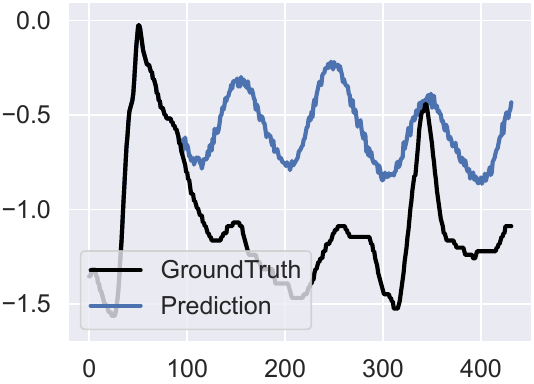}
}
\caption{The forecast sequences generated with DF and DistDF. The forecast length is set to 336 and the experiment is conducted on ETTm2.}
\label{fig:pred_app_ettm2_336}
\end{center}
\end{figure}

\begin{figure}
\begin{center}
\subfigure[Snapshot 1 with TimeBridge]{
    \includegraphics[width=0.238\linewidth]{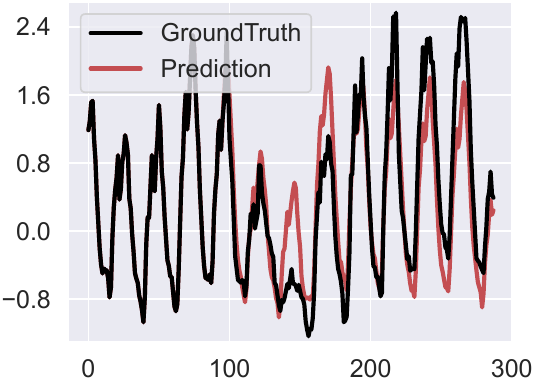}
    \includegraphics[width=0.238\linewidth]{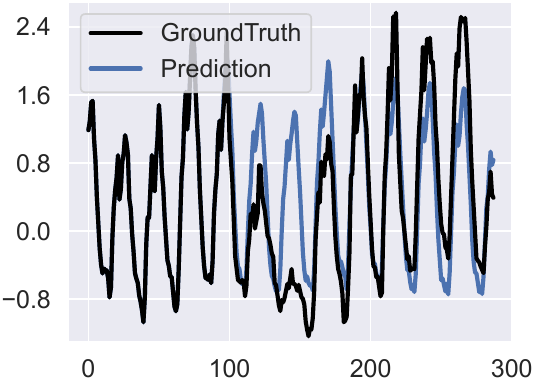}
}
\subfigure[Snapshot 1 with Fredformer]{
    \includegraphics[width=0.238\linewidth]{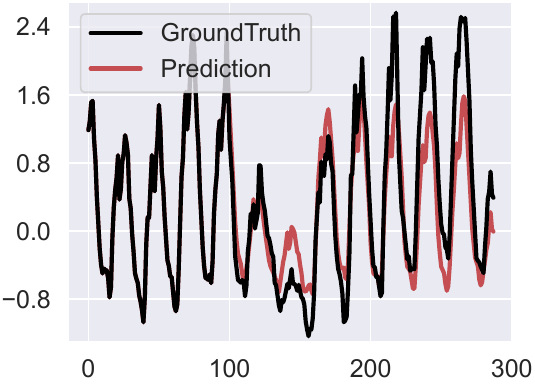}
    \includegraphics[width=0.238\linewidth]{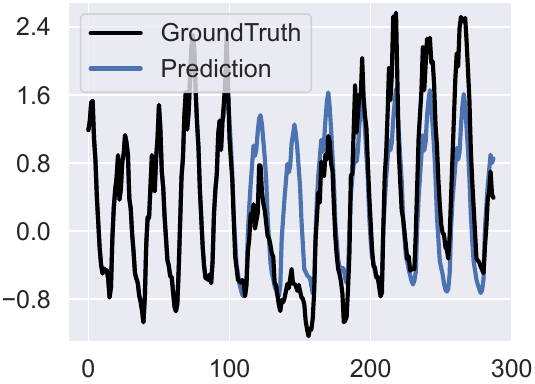}
}

\subfigure[Snapshot 2 with TimeBridge]{
    \includegraphics[width=0.238\linewidth]{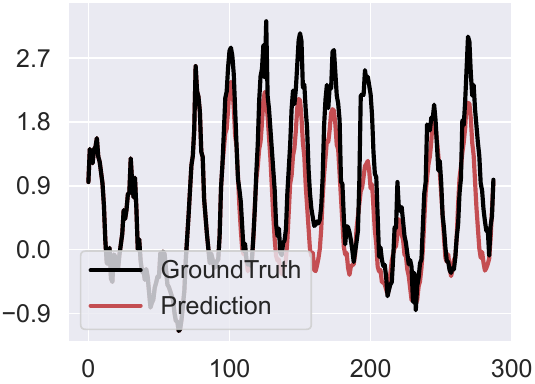}
    \includegraphics[width=0.238\linewidth]{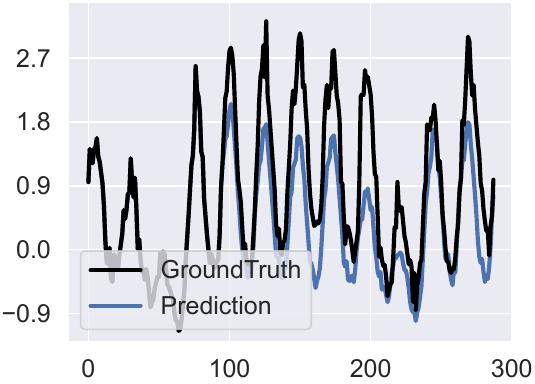}
}
\subfigure[Snapshot 2 with Fredformer]{
    \includegraphics[width=0.238\linewidth]{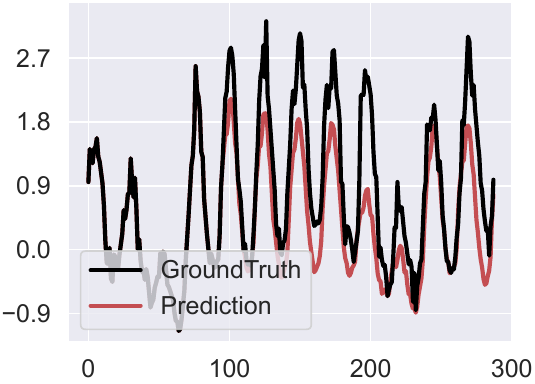}
    \includegraphics[width=0.238\linewidth]{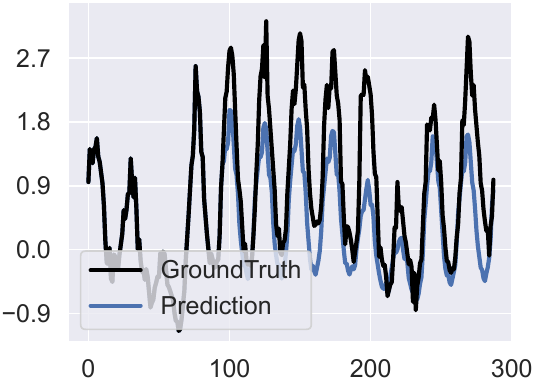}
}

\subfigure[Snapshot 3 with TimeBridge]{
    \includegraphics[width=0.238\linewidth]{fig/pred/TimeBridge_ECL_192_5014_w_DistDF.pdf}
    \includegraphics[width=0.238\linewidth]{fig/pred/TimeBridge_ECL_192_5014_wo_DistDF.pdf}
}
\subfigure[Snapshot 3 with Fredformer]{
    \includegraphics[width=0.238\linewidth]{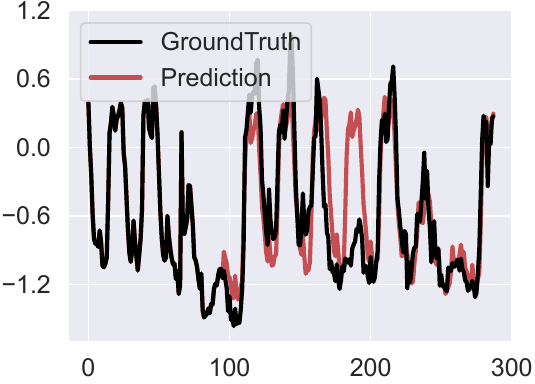}
    \includegraphics[width=0.238\linewidth]{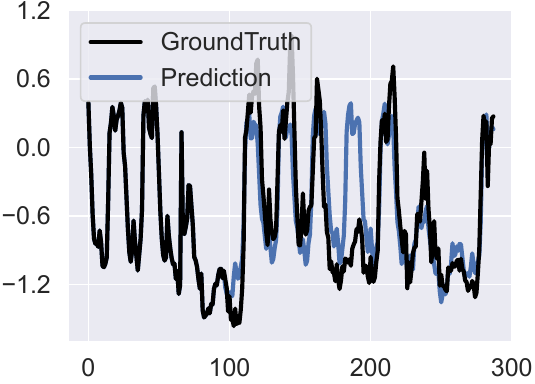}
}
\caption{The forecast sequences generated with DF and DistDF. The forecast length is set to 192 and the experiment is conducted on ECL.}
\label{fig:pred_app_ecl_192}
\end{center}
\end{figure}

\subsection{Comparison with different learning objectives}

\begin{table*}
  \caption{Comparable results with different learning objectives.}\label{tab:loss-app}
  \renewcommand{\arraystretch}{0.9} \setlength{\tabcolsep}{5pt} \scriptsize
  \centering
  \renewcommand{\multirowsetup}{\centering}
  \begin{threeparttable}
  \begin{tabular}{c|c|cc|cc|cc|cc|cc|cc|cc}
    \toprule
    \multicolumn{2}{l}{Loss} & 
    \multicolumn{2}{c}{\textbf{DistDF}} &
    \multicolumn{2}{c}{Time-o1} &
    \multicolumn{2}{c}{FreDF} &
    \multicolumn{2}{c}{Koopman} &
    \multicolumn{2}{c}{Dilate} &
    \multicolumn{2}{c}{Soft-DTW} &
    \multicolumn{2}{c}{DF} \\
    \cmidrule(lr){3-4} \cmidrule(lr){5-6}\cmidrule(lr){7-8} \cmidrule(lr){9-10}\cmidrule(lr){11-12}\cmidrule(lr){13-14}\cmidrule(lr){15-16}
    \multicolumn{2}{l}{Metrics}  & MSE & MAE  & MSE & MAE  & MSE & MAE  & MSE & MAE  & MSE & MAE  & MSE & MAE  & MSE & MAE  \\
    \hline
\rowcolor{blue!8}
\multicolumn{16}{l}{\textbf{Forecast model: TimeBridge}}\\\hline
\multirow{5}{*}{{\rotatebox{90}{\scalebox{0.95}{ETTm1}}}}
& 96 & 0.319 & 0.358 & 0.318 & 0.356 & 0.325 & 0.361 & 0.572 & 0.493 & 0.321 & 0.360 & 0.321 & 0.359 & 0.323 & 0.361 \\
& 192 & 0.363 & 0.383 & 0.363 & 0.382 & 0.373 & 0.385 & 0.410 & 0.407 & 0.366 & 0.386 & 0.368 & 0.385 & 0.366 & 0.385 \\
& 336 & 0.394 & 0.405 & 0.396 & 0.407 & 0.398 & 0.406 & 0.397 & 0.408 & 0.397 & 0.409 & 0.405 & 0.410 & 0.398 & 0.408 \\
& 720 & 0.455 & 0.442 & 0.456 & 0.443 & 0.450 & 0.438 & 0.460 & 0.445 & 0.462 & 0.447 & 0.486 & 0.453 & 0.461 & 0.445 \\
\cmidrule(lr){2-16}
& Avg & 0.383 & 0.397 & 0.383 & 0.397 & 0.386 & 0.398 & 0.460 & 0.438 & 0.387 & 0.400 & 0.395 & 0.402 & 0.387 & 0.400 \\
\midrule

\multirow{5}{*}{{\rotatebox{90}{\scalebox{0.95}{ETTh1}}}}
& 96 & 0.372 & 0.392 & 0.372 & 0.391 & 0.373 & 0.391 & 0.376 & 0.397 & 0.376 & 0.396 & 0.376 & 0.395 & 0.373 & 0.395 \\
& 192 & 0.424 & 0.429 & 0.422 & 0.423 & 0.425 & 0.421 & 0.426 & 0.430 & 0.430 & 0.433 & 0.425 & 0.427 & 0.428 & 0.426 \\
& 336 & 0.467 & 0.450 & 0.468 & 0.450 & 0.467 & 0.442 & 0.483 & 0.461 & 0.498 & 0.469 & 0.481 & 0.458 & 0.471 & 0.451 \\
& 720 & 0.472 & 0.471 & 0.495 & 0.488 & 0.493 & 0.490 & 0.551 & 0.509 & 0.552 & 0.509 & 0.529 & 0.499 & 0.495 & 0.487 \\
\cmidrule(lr){2-16}
& Avg & 0.434 & 0.436 & 0.439 & 0.438 & 0.439 & 0.436 & 0.459 & 0.449 & 0.464 & 0.452 & 0.452 & 0.445 & 0.442 & 0.440 \\
\midrule

\multirow{5}{*}{{\rotatebox{90}{\scalebox{0.95}{ECL}}}}
& 96 & 0.137 & 0.235 & 0.148 & 0.240 & 0.137 & 0.232 & 0.170 & 0.266 & 0.142 & 0.240 & 0.139 & 0.235 & 0.142 & 0.239 \\
& 192 & 0.159 & 0.257 & 0.156 & 0.251 & 0.159 & 0.254 & 0.161 & 0.258 & 0.160 & 0.257 & 0.160 & 0.257 & 0.161 & 0.257 \\
& 336 & 0.178 & 0.272 & 0.177 & 0.273 & 0.179 & 0.273 & 0.182 & 0.277 & 0.182 & 0.277 & 0.178 & 0.274 & 0.182 & 0.278 \\
& 720 & 0.212 & 0.302 & 0.220 & 0.308 & 0.224 & 0.310 & 0.217 & 0.308 & 0.218 & 0.309 & 0.215 & 0.305 & 0.217 & 0.309 \\
\cmidrule(lr){2-16}
& Avg & 0.172 & 0.267 & 0.175 & 0.268 & 0.175 & 0.267 & 0.182 & 0.277 & 0.176 & 0.271 & 0.173 & 0.268 & 0.176 & 0.271 \\
\midrule

\multirow{5}{*}{{\rotatebox{90}{\scalebox{0.95}{Weather}}}}
& 96 & 0.164 & 0.209 & 0.166 & 0.209 & 0.174 & 0.213 & 0.215 & 0.261 & 0.168 & 0.211 & 0.169 & 0.209 & 0.168 & 0.211 \\
& 192 & 0.212 & 0.252 & 0.212 & 0.252 & 0.223 & 0.255 & 0.239 & 0.271 & 0.214 & 0.254 & 0.215 & 0.251 & 0.214 & 0.254 \\
& 336 & 0.270 & 0.295 & 0.270 & 0.294 & 0.271 & 0.292 & 0.271 & 0.295 & 0.273 & 0.297 & 0.275 & 0.296 & 0.273 & 0.297 \\
& 720 & 0.348 & 0.345 & 0.352 & 0.347 & 0.350 & 0.346 & 0.350 & 0.345 & 0.353 & 0.347 & 0.379 & 0.364 & 0.353 & 0.347 \\
\cmidrule(lr){2-16}
& Avg & 0.248 & 0.275 & 0.250 & 0.275 & 0.254 & 0.276 & 0.269 & 0.293 & 0.252 & 0.277 & 0.260 & 0.280 & 0.252 & 0.277 \\

\hline
\rowcolor{blue!8}
\multicolumn{16}{l}{\textbf{Forecast model: FredFormer}}\\\hline
\multirow{5}{*}{{\rotatebox{90}{\scalebox{0.95}{ETTm1}}}}
& 96 & 0.316 & 0.357 & 0.321 & 0.357 & 0.326 & 0.355 & 0.335 & 0.368 & 0.337 & 0.367 & 0.332 & 0.363 & 0.326 & 0.361 \\
& 192 & 0.358 & 0.380 & 0.360 & 0.378 & 0.363 & 0.380 & 0.366 & 0.384 & 0.364 & 0.384 & 0.370 & 0.386 & 0.365 & 0.382 \\
& 336 & 0.392 & 0.404 & 0.389 & 0.400 & 0.392 & 0.400 & 0.399 & 0.408 & 0.397 & 0.406 & 0.406 & 0.409 & 0.396 & 0.404 \\
& 720 & 0.448 & 0.437 & 0.447 & 0.435 & 0.455 & 0.440 & 0.456 & 0.441 & 0.457 & 0.443 & 0.478 & 0.450 & 0.459 & 0.444 \\
\cmidrule(lr){2-16}
& Avg & 0.378 & 0.394 & 0.379 & 0.393 & 0.384 & 0.394 & 0.389 & 0.400 & 0.389 & 0.400 & 0.397 & 0.402 & 0.387 & 0.398 \\
\midrule

\multirow{5}{*}{{\rotatebox{90}{\scalebox{0.95}{ETTh1}}}}
& 96 & 0.373 & 0.393 & 0.368 & 0.391 & 0.370 & 0.392 & 0.375 & 0.397 & 0.378 & 0.399 & 0.376 & 0.398 & 0.377 & 0.396 \\
& 192 & 0.428 & 0.425 & 0.424 & 0.422 & 0.436 & 0.437 & 0.438 & 0.434 & 0.439 & 0.435 & 0.439 & 0.435 & 0.437 & 0.425 \\
& 336 & 0.466 & 0.445 & 0.467 & 0.441 & 0.473 & 0.443 & 0.473 & 0.455 & 0.481 & 0.453 & 0.484 & 0.455 & 0.486 & 0.449 \\
& 720 & 0.453 & 0.453 & 0.465 & 0.463 & 0.474 & 0.466 & 0.523 & 0.487 & 0.516 & 0.482 & 0.542 & 0.510 & 0.488 & 0.467 \\
\cmidrule(lr){2-16}
& Avg & 0.430 & 0.429 & 0.431 & 0.429 & 0.438 & 0.434 & 0.452 & 0.443 & 0.453 & 0.442 & 0.460 & 0.449 & 0.447 & 0.434 \\
\midrule

\multirow{5}{*}{{\rotatebox{90}{\scalebox{0.95}{ECL}}}}
& 96 & 0.145 & 0.238 & 0.151 & 0.245 & 0.152 & 0.247 & 0.166 & 0.263 & 0.158 & 0.253 & 0.168 & 0.266 & 0.161 & 0.258 \\
& 192 & 0.162 & 0.255 & 0.166 & 0.256 & 0.166 & 0.257 & 0.174 & 0.267 & 0.170 & 0.263 & 0.218 & 0.313 & 0.174 & 0.269 \\
& 336 & 0.176 & 0.270 & 0.181 & 0.274 & 0.183 & 0.278 & 0.188 & 0.280 & 0.190 & 0.286 & 0.197 & 0.291 & 0.194 & 0.290 \\
& 720 & 0.211 & 0.300 & 0.213 & 0.304 & 0.216 & 0.304 & 0.232 & 0.318 & 0.229 & 0.316 & 0.240 & 0.322 & 0.235 & 0.319 \\
\cmidrule(lr){2-16}
& Avg & 0.173 & 0.266 & 0.178 & 0.270 & 0.179 & 0.272 & 0.190 & 0.282 & 0.187 & 0.280 & 0.206 & 0.298 & 0.191 & 0.284 \\
\midrule

\multirow{5}{*}{{\rotatebox{90}{\scalebox{0.95}{Weather}}}}
& 96 & 0.172 & 0.212 & 0.171 & 0.208 & 0.174 & 0.213 & 0.174 & 0.214 & 0.173 & 0.214 & 0.173 & 0.213 & 0.180 & 0.220 \\
& 192 & 0.218 & 0.255 & 0.219 & 0.253 & 0.219 & 0.254 & 0.220 & 0.256 & 0.225 & 0.260 & 0.220 & 0.255 & 0.222 & 0.258 \\
& 336 & 0.277 & 0.297 & 0.277 & 0.295 & 0.278 & 0.296 & 0.280 & 0.298 & 0.280 & 0.299 & 0.281 & 0.296 & 0.283 & 0.301 \\
& 720 & 0.352 & 0.347 & 0.353 & 0.346 & 0.354 & 0.347 & 0.354 & 0.347 & 0.355 & 0.348 & 0.369 & 0.355 & 0.358 & 0.348 \\
\cmidrule(lr){2-16}
& Avg & 0.255 & 0.277 & 0.255 & 0.276 & 0.256 & 0.277 & 0.257 & 0.279 & 0.258 & 0.280 & 0.261 & 0.280 & 0.261 & 0.282 \\

    \bottomrule
  \end{tabular}
  \end{threeparttable}
\end{table*}
Additional experimental results of learning objective comparison are available in  \autoref{tab:loss-app}, where two forecasting models are evaluated across different $\T$ values.

\subsection{Generalization studies}\label{sec:generalize_app}
Additional experimental results of varying forecasting models are available in  \autoref{fig:backbone_app}, where four forecasting models are involved on four datasets.

\begin{figure}[]
\begin{center}
\includegraphics[width=0.245\linewidth]{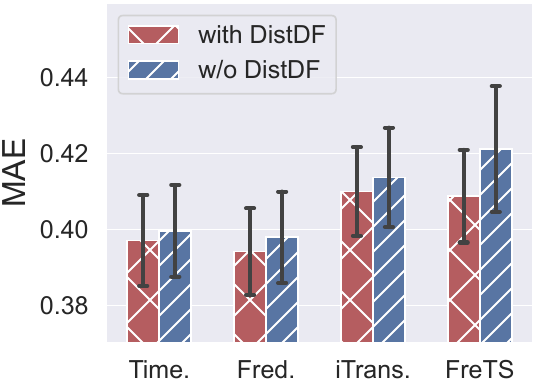}
\includegraphics[width=0.245\linewidth]{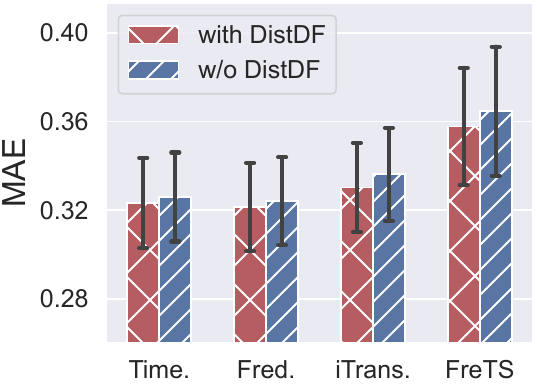}
\includegraphics[width=0.245\linewidth]{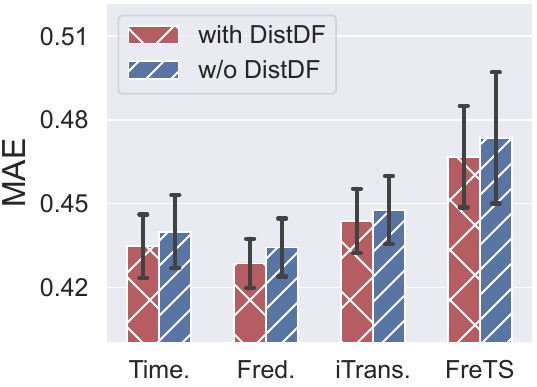}
\includegraphics[width=0.245\linewidth]{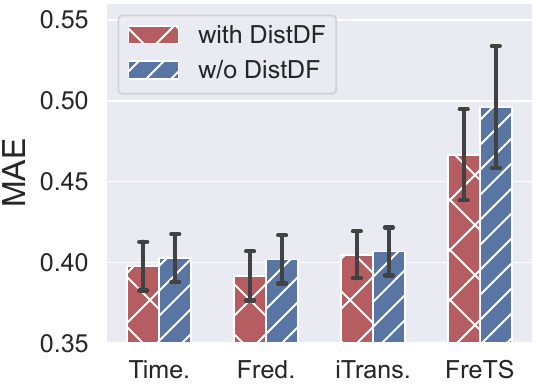}
\subfigure[ETTm1]{\includegraphics[width=0.245\linewidth]{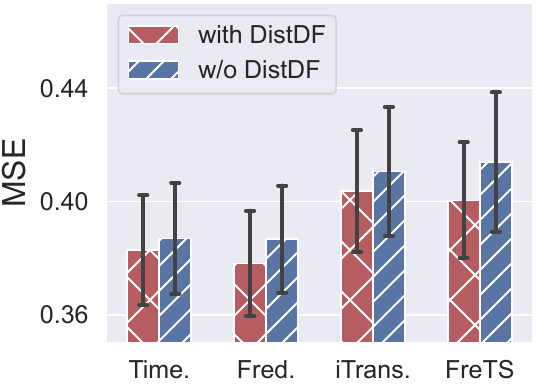}}
\subfigure[ETTm2]{\includegraphics[width=0.245\linewidth]{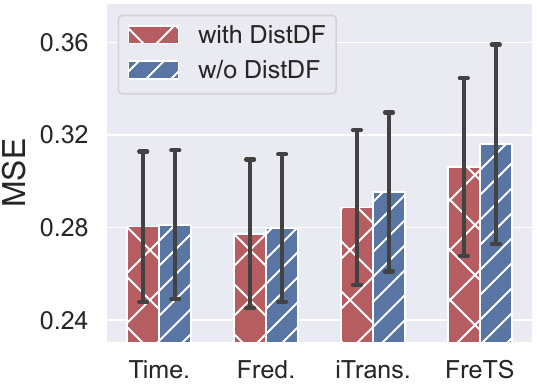}}
\subfigure[ETTh1]{\includegraphics[width=0.245\linewidth]{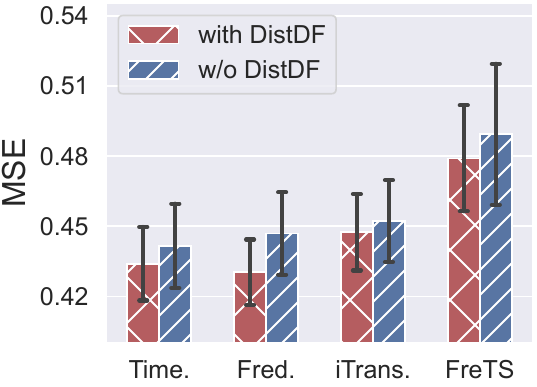}}
\subfigure[ETTh2]{\includegraphics[width=0.245\linewidth]{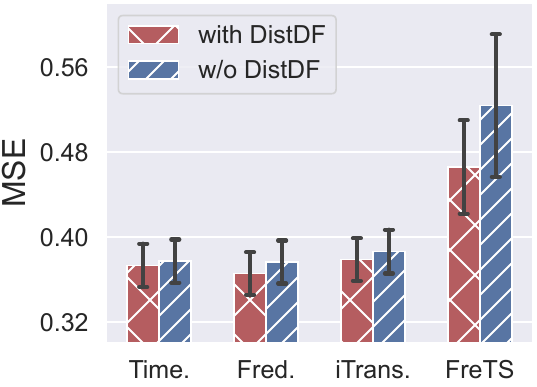}}
\caption{Performance of different forecasting models with and without DistDF. The forecasting errors are averaged over forecast lengths and the error bars represent 50\% confidence intervals.}
\label{fig:backbone_app}
\end{center}
\end{figure}

\subsection{Complexity}\label{sec:comp_app}
Additional experimental results of the running time of DistDF are available in \autoref{fig:complex}. The batch size and dimension are set to 128 and 21, respectively. As the forecast length $\T$ increases, the running time for both forward and backward passes generally rises, with some fluctuations. This trend is expected, since $\T$ affects the size of the matrices involved in computing the joint-distribution Wasserstein discrepancy in \eqref{eq:bw}. Nevertheless, the running time remains below 1 ms even when $\T$ increased to 1024. Furthermore, DistDF's additional computations occur exclusively during training and are completely isolated from the inference stage. 

As a result, \textit{DistDF introduces no additional complexity to model inference, and the extra computational cost during training is negligible.}

\begin{figure}[]
\begin{center}
\subfigure[Running time in the forward phase.]{\includegraphics[width=0.48\linewidth]{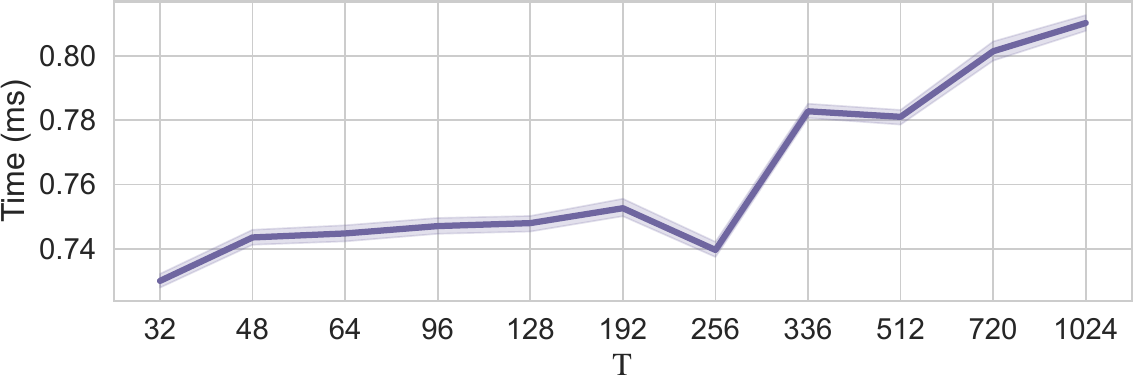}}
\subfigure[Running time in the backward phase.]{\includegraphics[width=0.48\linewidth]{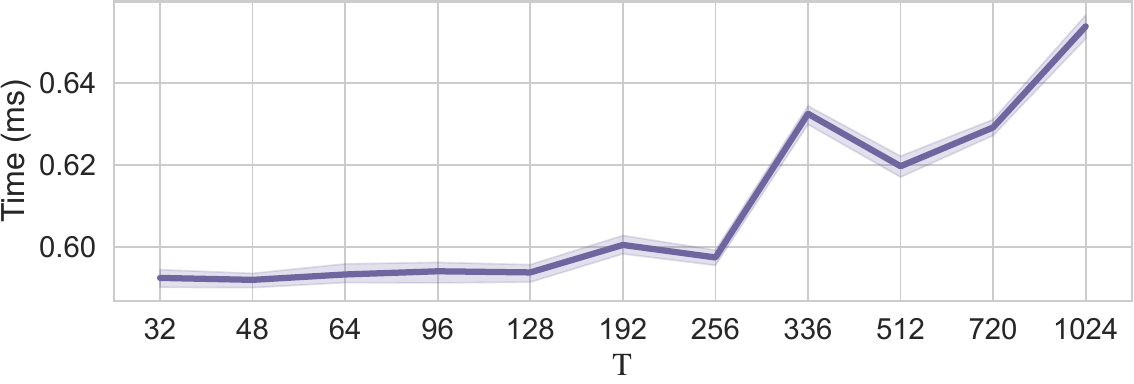}}
\caption{Running time (ms) with varying forecast length.}
\label{fig:complex}
\end{center}
\end{figure}


\subsection{Utility to improve recent forecasting models}
Additional experimental results demonstrating utility for improving recent forecasting architectures are available in \autoref{tab:backbone_app}. We select TQNet~\citep{tqnet}, TimeBridge~\citep{liu2024timebridge}, and FredFormer~\citep{fredformer} as testbeds due to their recency and competitive performance.

\subsection{Convergence}
Additional experimental results on the convergence of the BW discrepancy are available in \autoref{fig:loss_change}. The BW objective consistently exhibits a monotonic decrease throughout the training process and reaches a plateau after several epochs, thereby empirically validating the convergence of its optimization. 
In addition, we examine the evolution of MAE and MSE on the validation set. A significant positive correlation is observed between the dynamics of the BW loss and both forecasting metrics (MAE and MSE). It implies that minimizing the BW discrepancy effectively improves these forecasting metrics.

\subsection{Autoregression-based Forecasting Performance}
Additional experimental results under the autoregression-based forecasting are available in \autoref{tab:performance_autoregressive_app}.

\subsection{Probabilistic Forecasting Performance}
Additional experimental results under the probabilistic forecasting setting are available in \autoref{tab:probabilistic_app}, where we select D3U~\citep{D3U}, the state-of-the-art probabilistic forecasting framework as the testbed. 

\subsection{Multi-scale Forecasting Performance}
Additional experimental results under the multi-scale forecasting setting are available in \autoref{tab:multiscale_app}, where we select TimeMixer~\citep{wang2024timemixer} and SCINet~\citep{SCINet} as the testbeds.

\begin{table}
\caption{The performance comparison of DF and DistDF on different forecasting models. }\label{tab:backbone_app}
\renewcommand{\arraystretch}{1}
\setlength{\tabcolsep}{1.5pt}
\scriptsize
\centering
\renewcommand{\multirowsetup}{\centering}
\begin{threeparttable}
\begin{tabular}{c|c|cc|cc|cc|cc|cc|cc|cc|cc|cc|cc}
    \toprule
    \multicolumn{2}{l}{\rotatebox{0}{\scaleb{Models}}} & 
    \multicolumn{2}{c}{\rotatebox{0}{\scaleb{TQNet}}} &
    \multicolumn{2}{c}{\rotatebox{0}{\scaleb{TQNet$^\dagger$}}} &
    \multicolumn{2}{c}{\rotatebox{0}{\scaleb{TimeBridge}}} &
    \multicolumn{2}{c}{\rotatebox{0}{\scaleb{TimeBridge$^\dagger$}}} &
    \multicolumn{2}{c}{\rotatebox{0}{\scaleb{Fredformer}}} &
    \multicolumn{2}{c}{\rotatebox{0}{\scaleb{Fredformer$^\dagger$}}} &
    \multicolumn{2}{c}{\rotatebox{0}{\scaleb{iTransformer}}} &
    \multicolumn{2}{c}{\rotatebox{0}{\scaleb{iTransformer$^\dagger$}}} &
    \multicolumn{2}{c}{\rotatebox{0}{\scaleb{FreTS}}} &
    \multicolumn{2}{c}{\rotatebox{0}{\scaleb{FreTS$^\dagger$}}}  \\

    \cmidrule(lr){3-4} \cmidrule(lr){5-6}\cmidrule(lr){7-8} \cmidrule(lr){9-10}\cmidrule(lr){11-12} \cmidrule(lr){13-14}\cmidrule(lr){15-16} \cmidrule(lr){17-18}\cmidrule(lr){19-20} \cmidrule(lr){21-22}

    \multicolumn{2}{l}{\rotatebox{0}{\scaleb{Metrics}}}  & \scalea{MSE} & \scalea{MAE} & \scalea{MSE} & \scalea{MAE} & \scalea{MSE} & \scalea{MAE} & \scalea{MSE} & \scalea{MAE} & \scalea{MSE} & \scalea{MAE} & \scalea{MSE} & \scalea{MAE} & \scalea{MSE} & \scalea{MAE} & \scalea{MSE} & \scalea{MAE} & \scalea{MSE} & \scalea{MAE} & \scalea{MSE} & \scalea{MAE}\\
    \toprule

\multirow{5}{*}{{\rotatebox{90}{\scalebox{0.95}{ETTh1}}}}
& 96 & \scalea{0.372} & \scalea{0.391} & \scalea{0.372} & \scalea{0.391} & \scalea{0.373} & \scalea{0.395} & \scalea{0.372} & \scalea{0.392} & \scalea{0.377} & \scalea{0.396} & \scalea{0.373} & \scalea{0.393} & \scalea{0.385} & \scalea{0.405} & \scalea{0.383} & \scalea{0.403} & \scalea{0.398} & \scalea{0.409} & \scalea{0.399} & \scalea{0.409}  \\
& 192 & \scalea{0.430} & \scalea{0.424} & \scalea{0.430} & \scalea{0.422} & \scalea{0.428} & \scalea{0.426} & \scalea{0.424} & \scalea{0.429} & \scalea{0.437} & \scalea{0.425} & \scalea{0.428} & \scalea{0.425} & \scalea{0.440} & \scalea{0.437} & \scalea{0.438} & \scalea{0.434} & \scalea{0.451} & \scalea{0.442} & \scalea{0.457} & \scalea{0.447}  \\
& 336 & \scalea{0.486} & \scalea{0.454} & \scalea{0.472} & \scalea{0.444} & \scalea{0.471} & \scalea{0.451} & \scalea{0.467} & \scalea{0.450} & \scalea{0.486} & \scalea{0.449} & \scalea{0.466} & \scalea{0.445} & \scalea{0.480} & \scalea{0.457} & \scalea{0.476} & \scalea{0.455} & \scalea{0.501} & \scalea{0.472} & \scalea{0.504} & \scalea{0.474}  \\
& 720 & \scalea{0.507} & \scalea{0.486} & \scalea{0.477} & \scalea{0.468} & \scalea{0.495} & \scalea{0.487} & \scalea{0.472} & \scalea{0.471} & \scalea{0.488} & \scalea{0.467} & \scalea{0.453} & \scalea{0.453} & \scalea{0.504} & \scalea{0.492} & \scalea{0.492} & \scalea{0.483} & \scalea{0.608} & \scalea{0.571} & \scalea{0.557} & \scalea{0.537}  \\
\cmidrule(lr){2-22}
& Avg & \scalea{0.449} & \scalea{0.439} & \scalea{0.438} & \scalea{0.431} & \scalea{0.442} & \scalea{0.440} & \scalea{0.434} & \scalea{0.436} & \scalea{0.447} & \scalea{0.434} & \scalea{0.430} & \scalea{0.429} & \scalea{0.452} & \scalea{0.448} & \scalea{0.447} & \scalea{0.444} & \scalea{0.489} & \scalea{0.474} & \scalea{0.479} & \scalea{0.467}  \\
\midrule

\multirow{5}{*}{{\rotatebox{90}{\scalebox{0.95}{ETTh2}}}}
& 96 & \scalea{0.293} & \scalea{0.343} & \scalea{0.289} & \scalea{0.339} & \scalea{0.294} & \scalea{0.344} & \scalea{0.289} & \scalea{0.338} & \scalea{0.293} & \scalea{0.344} & \scalea{0.287} & \scalea{0.336} & \scalea{0.301} & \scalea{0.349} & \scalea{0.296} & \scalea{0.347} & \scalea{0.315} & \scalea{0.374} & \scalea{0.311} & \scalea{0.369}  \\
& 192 & \scalea{0.364} & \scalea{0.390} & \scalea{0.362} & \scalea{0.388} & \scalea{0.371} & \scalea{0.394} & \scalea{0.369} & \scalea{0.390} & \scalea{0.372} & \scalea{0.391} & \scalea{0.358} & \scalea{0.381} & \scalea{0.383} & \scalea{0.397} & \scalea{0.375} & \scalea{0.397} & \scalea{0.466} & \scalea{0.467} & \scalea{0.418} & \scalea{0.433}  \\
& 336 & \scalea{0.411} & \scalea{0.424} & \scalea{0.410} & \scalea{0.424} & \scalea{0.421} & \scalea{0.429} & \scalea{0.415} & \scalea{0.426} & \scalea{0.420} & \scalea{0.433} & \scalea{0.408} & \scalea{0.421} & \scalea{0.425} & \scalea{0.432} & \scalea{0.421} & \scalea{0.434} & \scalea{0.522} & \scalea{0.502} & \scalea{0.521} & \scalea{0.505}  \\
& 720 & \scalea{0.430} & \scalea{0.444} & \scalea{0.426} & \scalea{0.443} & \scalea{0.423} & \scalea{0.443} & \scalea{0.420} & \scalea{0.438} & \scalea{0.421} & \scalea{0.439} & \scalea{0.416} & \scalea{0.435} & \scalea{0.436} & \scalea{0.448} & \scalea{0.423} & \scalea{0.441} & \scalea{0.792} & \scalea{0.643} & \scalea{0.613} & \scalea{0.560}  \\
\cmidrule(lr){2-22}
& Avg & \scalea{0.375} & \scalea{0.400} & \scalea{0.371} & \scalea{0.399} & \scalea{0.377} & \scalea{0.403} & \scalea{0.373} & \scalea{0.398} & \scalea{0.377} & \scalea{0.402} & \scalea{0.367} & \scalea{0.393} & \scalea{0.386} & \scalea{0.407} & \scalea{0.379} & \scalea{0.405} & \scalea{0.524} & \scalea{0.496} & \scalea{0.466} & \scalea{0.467}  \\
\midrule

\multirow{5}{*}{{\rotatebox{90}{\scalebox{0.95}{ETTm1}}}}
& 96 & \scalea{0.310} & \scalea{0.352} & \scalea{0.311} & \scalea{0.351} & \scalea{0.323} & \scalea{0.361} & \scalea{0.319} & \scalea{0.358} & \scalea{0.326} & \scalea{0.361} & \scalea{0.316} & \scalea{0.357} & \scalea{0.338} & \scalea{0.372} & \scalea{0.334} & \scalea{0.372} & \scalea{0.342} & \scalea{0.375} & \scalea{0.335} & \scalea{0.371}  \\
& 192 & \scalea{0.356} & \scalea{0.377} & \scalea{0.353} & \scalea{0.377} & \scalea{0.366} & \scalea{0.385} & \scalea{0.363} & \scalea{0.383} & \scalea{0.365} & \scalea{0.382} & \scalea{0.358} & \scalea{0.380} & \scalea{0.382} & \scalea{0.396} & \scalea{0.381} & \scalea{0.397} & \scalea{0.385} & \scalea{0.400} & \scalea{0.379} & \scalea{0.393}  \\
& 336 & \scalea{0.388} & \scalea{0.400} & \scalea{0.387} & \scalea{0.400} & \scalea{0.398} & \scalea{0.408} & \scalea{0.394} & \scalea{0.405} & \scalea{0.396} & \scalea{0.404} & \scalea{0.392} & \scalea{0.404} & \scalea{0.427} & \scalea{0.424} & \scalea{0.415} & \scalea{0.418} & \scalea{0.416} & \scalea{0.421} & \scalea{0.408} & \scalea{0.415}  \\
& 720 & \scalea{0.450} & \scalea{0.437} & \scalea{0.449} & \scalea{0.436} & \scalea{0.461} & \scalea{0.445} & \scalea{0.455} & \scalea{0.442} & \scalea{0.459} & \scalea{0.444} & \scalea{0.448} & \scalea{0.437} & \scalea{0.496} & \scalea{0.463} & \scalea{0.485} & \scalea{0.454} & \scalea{0.513} & \scalea{0.489} & \scalea{0.479} & \scalea{0.456}  \\
\cmidrule(lr){2-22}
& Avg & \scalea{0.376} & \scalea{0.391} & \scalea{0.375} & \scalea{0.391} & \scalea{0.387} & \scalea{0.400} & \scalea{0.383} & \scalea{0.397} & \scalea{0.387} & \scalea{0.398} & \scalea{0.378} & \scalea{0.394} & \scalea{0.411} & \scalea{0.414} & \scalea{0.404} & \scalea{0.410} & \scalea{0.414} & \scalea{0.421} & \scalea{0.400} & \scalea{0.409}  \\
\midrule

\multirow{5}{*}{{\rotatebox{90}{\scalebox{0.95}{ETTm2}}}}
& 96 & \scalea{0.175} & \scalea{0.256} & \scalea{0.171} & \scalea{0.254} & \scalea{0.177} & \scalea{0.259} & \scalea{0.176} & \scalea{0.256} & \scalea{0.177} & \scalea{0.260} & \scalea{0.174} & \scalea{0.256} & \scalea{0.182} & \scalea{0.265} & \scalea{0.181} & \scalea{0.263} & \scalea{0.188} & \scalea{0.279} & \scalea{0.185} & \scalea{0.275}  \\
& 192 & \scalea{0.243} & \scalea{0.300} & \scalea{0.234} & \scalea{0.295} & \scalea{0.243} & \scalea{0.303} & \scalea{0.241} & \scalea{0.300} & \scalea{0.242} & \scalea{0.300} & \scalea{0.239} & \scalea{0.298} & \scalea{0.257} & \scalea{0.315} & \scalea{0.249} & \scalea{0.307} & \scalea{0.264} & \scalea{0.329} & \scalea{0.253} & \scalea{0.318}  \\
& 336 & \scalea{0.297} & \scalea{0.336} & \scalea{0.292} & \scalea{0.333} & \scalea{0.303} & \scalea{0.343} & \scalea{0.302} & \scalea{0.340} & \scalea{0.302} & \scalea{0.340} & \scalea{0.300} & \scalea{0.338} & \scalea{0.320} & \scalea{0.354} & \scalea{0.311} & \scalea{0.347} & \scalea{0.322} & \scalea{0.369} & \scalea{0.338} & \scalea{0.386}  \\
& 720 & \scalea{0.394} & \scalea{0.393} & \scalea{0.390} & \scalea{0.390} & \scalea{0.401} & \scalea{0.399} & \scalea{0.403} & \scalea{0.397} & \scalea{0.399} & \scalea{0.397} & \scalea{0.397} & \scalea{0.394} & \scalea{0.423} & \scalea{0.411} & \scalea{0.414} & \scalea{0.404} & \scalea{0.489} & \scalea{0.482} & \scalea{0.449} & \scalea{0.453}  \\
\cmidrule(lr){2-22}
& Avg & \scalea{0.277} & \scalea{0.321} & \scalea{0.272} & \scalea{0.318} & \scalea{0.281} & \scalea{0.326} & \scalea{0.280} & \scalea{0.323} & \scalea{0.280} & \scalea{0.324} & \scalea{0.277} & \scalea{0.321} & \scalea{0.295} & \scalea{0.336} & \scalea{0.289} & \scalea{0.330} & \scalea{0.316} & \scalea{0.365} & \scalea{0.306} & \scalea{0.358}  \\
\midrule

\multirow{5}{*}{{\rotatebox{90}{\scalebox{0.95}{ECL}}}}
& 96 & \scalea{0.143} & \scalea{0.237} & \scalea{0.139} & \scalea{0.233} & \scalea{0.142} & \scalea{0.239} & \scalea{0.137} & \scalea{0.235} & \scalea{0.161} & \scalea{0.258} & \scalea{0.145} & \scalea{0.238} & \scalea{0.150} & \scalea{0.242} & \scalea{0.148} & \scalea{0.239} & \scalea{0.180} & \scalea{0.266} & \scalea{0.179} & \scalea{0.266}  \\
& 192 & \scalea{0.161} & \scalea{0.252} & \scalea{0.157} & \scalea{0.249} & \scalea{0.161} & \scalea{0.257} & \scalea{0.159} & \scalea{0.257} & \scalea{0.174} & \scalea{0.269} & \scalea{0.162} & \scalea{0.255} & \scalea{0.168} & \scalea{0.259} & \scalea{0.163} & \scalea{0.253} & \scalea{0.184} & \scalea{0.272} & \scalea{0.183} & \scalea{0.271}  \\
& 336 & \scalea{0.178} & \scalea{0.270} & \scalea{0.174} & \scalea{0.267} & \scalea{0.182} & \scalea{0.278} & \scalea{0.178} & \scalea{0.272} & \scalea{0.194} & \scalea{0.290} & \scalea{0.176} & \scalea{0.270} & \scalea{0.182} & \scalea{0.274} & \scalea{0.176} & \scalea{0.270} & \scalea{0.199} & \scalea{0.290} & \scalea{0.199} & \scalea{0.288}  \\
& 720 & \scalea{0.218} & \scalea{0.303} & \scalea{0.212} & \scalea{0.298} & \scalea{0.217} & \scalea{0.309} & \scalea{0.212} & \scalea{0.302} & \scalea{0.235} & \scalea{0.319} & \scalea{0.211} & \scalea{0.300} & \scalea{0.214} & \scalea{0.304} & \scalea{0.209} & \scalea{0.298} & \scalea{0.234} & \scalea{0.322} & \scalea{0.235} & \scalea{0.322}  \\
\cmidrule(lr){2-22}
& Avg & \scalea{0.175} & \scalea{0.265} & \scalea{0.171} & \scalea{0.262} & \scalea{0.176} & \scalea{0.271} & \scalea{0.172} & \scalea{0.267} & \scalea{0.191} & \scalea{0.284} & \scalea{0.173} & \scalea{0.266} & \scalea{0.179} & \scalea{0.270} & \scalea{0.174} & \scalea{0.265} & \scalea{0.199} & \scalea{0.288} & \scalea{0.199} & \scalea{0.287}  \\
\midrule

\multirow{5}{*}{{\rotatebox{90}{\scalebox{0.95}{Weather}}}}
& 96 & \scalea{0.160} & \scalea{0.203} & \scalea{0.160} & \scalea{0.202} & \scalea{0.168} & \scalea{0.211} & \scalea{0.164} & \scalea{0.209} & \scalea{0.180} & \scalea{0.220} & \scalea{0.172} & \scalea{0.212} & \scalea{0.171} & \scalea{0.210} & \scalea{0.174} & \scalea{0.214} & \scalea{0.174} & \scalea{0.228} & \scalea{0.173} & \scalea{0.229}  \\
& 192 & \scalea{0.210} & \scalea{0.247} & \scalea{0.208} & \scalea{0.246} & \scalea{0.214} & \scalea{0.254} & \scalea{0.212} & \scalea{0.252} & \scalea{0.222} & \scalea{0.258} & \scalea{0.218} & \scalea{0.255} & \scalea{0.246} & \scalea{0.278} & \scalea{0.223} & \scalea{0.256} & \scalea{0.213} & \scalea{0.266} & \scalea{0.212} & \scalea{0.264}  \\
& 336 & \scalea{0.267} & \scalea{0.289} & \scalea{0.264} & \scalea{0.287} & \scalea{0.273} & \scalea{0.297} & \scalea{0.270} & \scalea{0.295} & \scalea{0.283} & \scalea{0.301} & \scalea{0.277} & \scalea{0.297} & \scalea{0.296} & \scalea{0.313} & \scalea{0.280} & \scalea{0.299} & \scalea{0.270} & \scalea{0.316} & \scalea{0.263} & \scalea{0.305}  \\
& 720 & \scalea{0.346} & \scalea{0.342} & \scalea{0.344} & \scalea{0.342} & \scalea{0.353} & \scalea{0.347} & \scalea{0.348} & \scalea{0.345} & \scalea{0.358} & \scalea{0.348} & \scalea{0.352} & \scalea{0.347} & \scalea{0.362} & \scalea{0.353} & \scalea{0.357} & \scalea{0.350} & \scalea{0.337} & \scalea{0.362} & \scalea{0.331} & \scalea{0.355}  \\
\cmidrule(lr){2-22}
& Avg & \scalea{0.246} & \scalea{0.270} & \scalea{0.244} & \scalea{0.269} & \scalea{0.252} & \scalea{0.277} & \scalea{0.248} & \scalea{0.275} & \scalea{0.261} & \scalea{0.282} & \scalea{0.255} & \scalea{0.277} & \scalea{0.269} & \scalea{0.289} & \scalea{0.258} & \scalea{0.280} & \scalea{0.249} & \scalea{0.293} & \scalea{0.245} & \scalea{0.288}  \\
    \bottomrule
\end{tabular}
\begin{tablenotes}
\item  \tiny \textit{Note}: The length of history window is set to 96 for all baselines. \texttt{Avg} indicates the results averaged over forecasting lengths: T=96, 192, 336 and 720. $^\dagger$ marks the forecasting model trained via DistDF.
\end{tablenotes}
\end{threeparttable}
\end{table}

\begin{figure}
\begin{center}
\subfigure[Disc on the training set.]{
\includegraphics[width=0.31\linewidth]{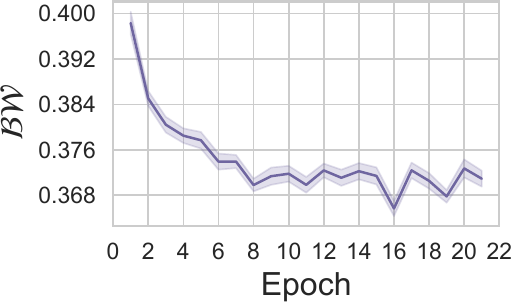}
\includegraphics[width=0.31\linewidth]{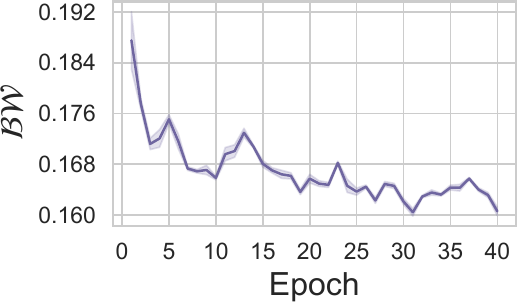}
\includegraphics[width=0.31\linewidth]{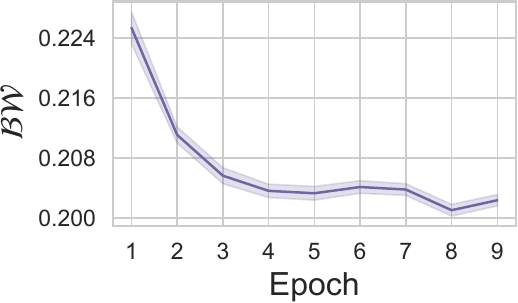}
}
\subfigure[Disc on the validation set.]{
\includegraphics[width=0.31\linewidth]{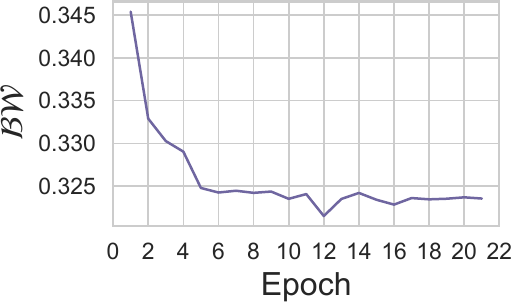}
\includegraphics[width=0.31\linewidth]{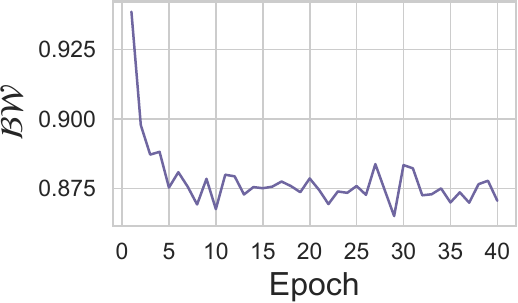}
\includegraphics[width=0.31\linewidth]{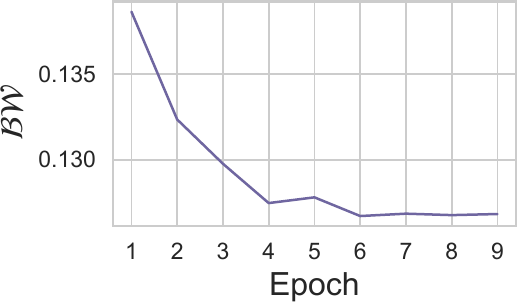}
}
\subfigure[MAE on the validation set.]{
\includegraphics[width=0.31\linewidth]{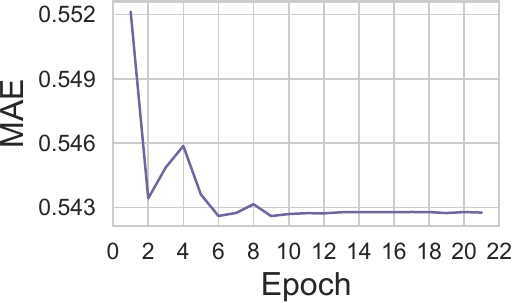}
\includegraphics[width=0.31\linewidth]{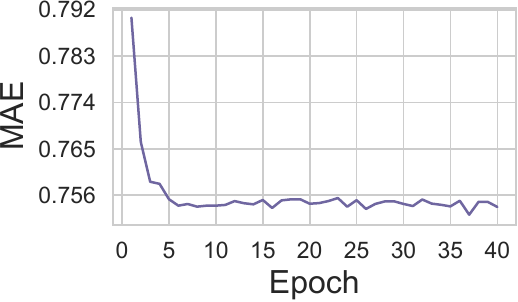}
\includegraphics[width=0.31\linewidth]{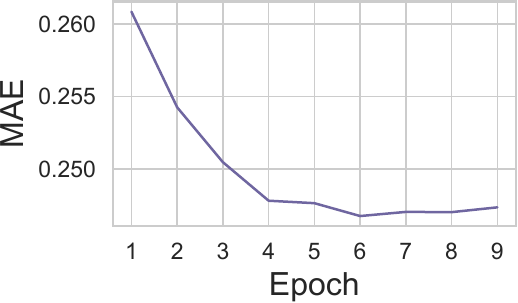}
}
\subfigure[MSE on the validation set.]{
\includegraphics[width=0.31\linewidth]{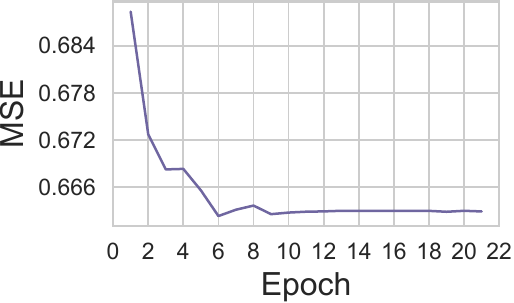}
\includegraphics[width=0.31\linewidth]{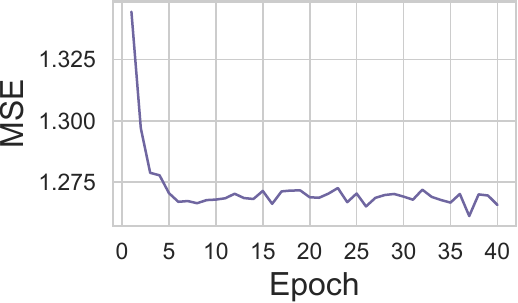}
\includegraphics[width=0.31\linewidth]{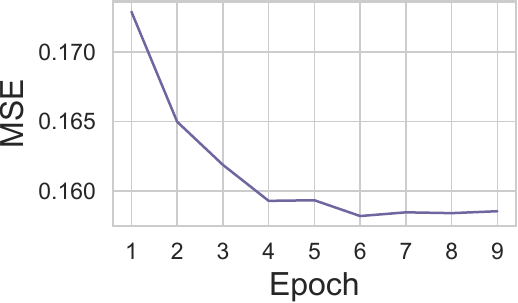}
}

\caption{Evolution of training objectives and validation metrics across four datasets: ETTm1, ETTh1, and ECL (from left to right).}\label{fig:loss_change}
\end{center}
\end{figure}

\begin{table}
\caption{The performance comparison of DF and DistDF on the autoregressive forecasting setting.}\label{tab:performance_autoregressive_app}
\renewcommand{\arraystretch}{1}
\setlength{\tabcolsep}{8pt}
\scriptsize
\centering
\renewcommand{\multirowsetup}{\centering}
\begin{threeparttable}
\begin{tabular}{c|c|cc|cc|cc|cc}
\toprule
\multicolumn{2}{l}{\rotatebox{0}{{Models}}} & 
\multicolumn{2}{c}{\rotatebox{0}{{TimeBridge}}} &
\multicolumn{2}{c}{\rotatebox{0}{{TimeBridge$^\dagger$}}} &
\multicolumn{2}{c}{\rotatebox{0}{{Fredformer}}} &
\multicolumn{2}{c}{\rotatebox{0}{{Fredformer$^\dagger$}}}  \\
\cmidrule(lr){3-4} \cmidrule(lr){5-6}\cmidrule(lr){7-8} \cmidrule(lr){9-10}

\multicolumn{2}{l}{\rotatebox{0}{{Metrics}}}  & {MSE} & {MAE} & {MSE} & {MAE} & {MSE} & {MAE} & {MSE} & {MAE}  \\
\toprule
    
\multirow{5}{*}{{\rotatebox{90}{\scalebox{0.95}{ETTm1}}}}
& 96 & 0.405 & 0.402 & 0.395 & 0.391 & 0.391 & 0.396 & 0.386 & 0.390  \\
& 192 & 0.467 & 0.438 & 0.419 & 0.408 & 0.494 & 0.449 & 0.493 & 0.446  \\
& 336 & 0.518 & 0.467 & 0.460 & 0.437 & 0.572 & 0.500 & 0.579 & 0.486  \\
& 720 & 0.725 & 0.514 & 0.527 & 0.478 & 1.821 & 0.837 & 0.833 & 0.563  \\
\cmidrule(lr){2-10}
& Avg & 0.528 & 0.455 & 0.450 & 0.428 & 0.820 & 0.546 & 0.573 & 0.471  \\
\midrule
\multirow{5}{*}{{\rotatebox{90}{\scalebox{0.95}{Weather}}}}
& 96 & 0.527 & 0.343 & 0.241 & 0.275 & 0.241 & 0.267 & 0.211 & 0.245  \\
& 192 & 1.165 & 0.494 & 0.303 & 0.320 & 0.306 & 0.318 & 0.274 & 0.292  \\
& 336 & 4.826 & 0.749 & 0.371 & 0.365 & 0.330 & 0.331 & 0.312 & 0.322  \\
& 720 & 9.363 & 1.374 & 0.461 & 0.421 & 0.433 & 0.406 & 0.407 & 0.380  \\
\cmidrule(lr){2-10}
& Avg & 3.970 & 0.740 & 0.344 & 0.345 & 0.327 & 0.330 & 0.301 & 0.310  \\
    \bottomrule
\end{tabular}
\begin{tablenotes}
    \item  \tiny \textit{Note}: The length of history window is set to 96 for all baselines. \texttt{Avg} indicates the results averaged over forecasting lengths: T=96, 192, 336 and 720. $^\dagger$ marks the forecasting model trained via DistDF.
\end{tablenotes}
\end{threeparttable}
\end{table}

\begin{table}
    \caption{The performance comparison of DF and DistDF on the probabilistic forecasting task. }\label{tab:probabilistic_app}
    \renewcommand{\arraystretch}{1}
    \setlength{\tabcolsep}{10pt}
    \scriptsize
    \centering
    \renewcommand{\multirowsetup}{\centering}
    \begin{threeparttable}
    \begin{tabular}{c|c|cccc|cccc}
    \toprule
    \multicolumn{2}{l}{\rotatebox{0}{{Models}}} & 
    \multicolumn{4}{c}{\rotatebox{0}{{D3U}}} &
    \multicolumn{4}{c}{\rotatebox{0}{{D3U$^\dagger$}}} \\

    \cmidrule(lr){3-6} \cmidrule(lr){7-10}

    \multicolumn{2}{l}{\rotatebox{0}{{Metrics}}}  & {MSE} & {MAE} & {CRPS} & {CRPS$_\text{sum}$} & {MSE} & {MAE} & {CRPS} & {CRPS$_\text{sum}$}  \\
    \toprule
    
\multirow{5}{*}{{\rotatebox{90}{\scalebox{0.95}{ETTm1}}}}
& 96 & 0.317 & 0.357 & 0.263 & 0.723 & 0.316 & 0.357 & 0.265 & 0.720  \\
& 192 & 0.361 & 0.383 & 0.285 & 0.749 & 0.360 & 0.383 & 0.282 & 0.747  \\
& 336 & 0.394 & 0.404 & 0.299 & 0.742 & 0.390 & 0.402 & 0.298 & 0.731  \\
& 720 & 0.460 & 0.437 & 0.325 & 0.892 & 0.453 & 0.435 & 0.328 & 0.849  \\
\cmidrule(lr){2-10}
& Avg & 0.383 & 0.395 & 0.293 & 0.776 & 0.380 & 0.394 & 0.293 & 0.762  \\
\midrule
\multirow{5}{*}{{\rotatebox{90}{\scalebox{0.95}{Weather}}}}
& 96 & 0.176 & 0.240 & 0.174 & 0.179 & 0.173 & 0.225 & 0.171 & 0.173  \\
& 192 & 0.223 & 0.271 & 0.205 & 0.234 & 0.217 & 0.265 & 0.198 & 0.210  \\
& 336 & 0.279 & 0.309 & 0.233 & 0.269 & 0.278 & 0.310 & 0.233 & 0.260  \\
& 720 & 0.359 & 0.361 & 0.273 & 0.419 & 0.353 & 0.360 & 0.269 & 0.378  \\
\cmidrule(lr){2-10}
& Avg & 0.259 & 0.295 & 0.221 & 0.275 & 0.255 & 0.290 & 0.218 & 0.255  \\
    \bottomrule
\end{tabular}
    \begin{tablenotes}
        \item  \tiny \textit{Note}: The length of history window is set to 96 for all baselines. \texttt{Avg} indicates the results averaged over forecasting lengths: T=96, 192, 336 and 720. $^\dagger$ marks the forecasting model trained via DistDF.
    \end{tablenotes}
\end{threeparttable}
    \end{table}

\begin{table}
    \caption{The performance comparison of DF and DistDF on the multi-scale architectures. }\label{tab:multiscale_app}
    \renewcommand{\arraystretch}{1}
    \setlength{\tabcolsep}{10pt}
    \scriptsize
    \centering
    \renewcommand{\multirowsetup}{\centering}
    \begin{threeparttable}
    \begin{tabular}{c|c|cc|cc|cc|cc}
        \toprule
        \multicolumn{2}{l}{\rotatebox{0}{{Models}}} & 
        \multicolumn{2}{c}{\rotatebox{0}{{TimeMixer}}} &
        \multicolumn{2}{c}{\rotatebox{0}{{TimeMixer$^\dagger$}}} &
        \multicolumn{2}{c}{\rotatebox{0}{{SCINet}}} &
        \multicolumn{2}{c}{\rotatebox{0}{{SCINet$^\dagger$}}} \\
    
        \cmidrule(lr){3-4} \cmidrule(lr){5-6}\cmidrule(lr){7-8} \cmidrule(lr){9-10}
    
        \multicolumn{2}{l}{\rotatebox{0}{{Metrics}}}  & {MSE} & {MAE} & {MSE} & {MAE} & {MSE} & {MAE} & {MSE} & {MAE}  \\
        \toprule
    
    \multirow{5}{*}{{\rotatebox{90}{\scalebox{0.95}{ETTm1}}}}
    & 96 & {0.329} & {0.369} & {0.326} & {0.369} & {0.325} & {0.365} & {0.319} & {0.359}  \\
    & 192 & {0.371} & {0.391} & {0.373} & {0.392} & {0.383} & {0.397} & {0.367} & {0.385}  \\
    & 336 & {0.427} & {0.425} & {0.412} & {0.423} & {0.436} & {0.424} & {0.403} & {0.406}  \\
    & 720 & {0.564} & {0.506} & {0.491} & {0.459} & {0.528} & {0.476} & {0.469} & {0.444}  \\
    \cmidrule(lr){2-10}
    & Avg & {0.422} & {0.423} & {0.401} & {0.411} & {0.418} & {0.416} & {0.389} & {0.399}  \\
    \midrule
    
    \multirow{5}{*}{{\rotatebox{90}{\scalebox{0.95}{ETTh1}}}}
    & 96 & {0.419} & {0.426} & {0.400} & {0.410} & {0.409} & {0.415} & {0.397} & {0.405}  \\
    & 192 & {0.464} & {0.451} & {0.439} & {0.436} & {0.457} & {0.441} & {0.448} & {0.434}  \\
    & 336 & {0.509} & {0.472} & {0.485} & {0.450} & {0.499} & {0.461} & {0.491} & {0.455}  \\
    & 720 & {0.614} & {0.553} & {0.501} & {0.486} & {0.505} & {0.482} & {0.501} & {0.479}  \\
    \cmidrule(lr){2-10}
    & Avg & {0.501} & {0.476} & {0.456} & {0.446} & {0.467} & {0.450} & {0.459} & {0.443}  \\
    \midrule
    
    \multirow{5}{*}{{\rotatebox{90}{\scalebox{0.95}{ECL}}}}
    & 96 & {0.159} & {0.260} & {0.145} & {0.242} & {0.146} & {0.248} & {0.141} & {0.242}  \\
    & 192 & {0.161} & {0.258} & {0.159} & {0.256} & {0.167} & {0.266} & {0.159} & {0.257}  \\
    & 336 & {0.173} & {0.272} & {0.176} & {0.272} & {0.179} & {0.280} & {0.177} & {0.277}  \\
    & 720 & {0.212} & {0.302} & {0.207} & {0.298} & {0.202} & {0.298} & {0.197} & {0.294}  \\
    \cmidrule(lr){2-10}
    & Avg & {0.176} & {0.273} & {0.172} & {0.267} & {0.173} & {0.273} & {0.169} & {0.268}  \\
    \midrule
    
    \multirow{5}{*}{{\rotatebox{90}{\scalebox{0.95}{Weather}}}}
    & 96 & {0.173} & {0.220} & {0.168} & {0.217} & {0.160} & {0.208} & {0.158} & {0.207}  \\
    & 192 & {0.213} & {0.254} & {0.212} & {0.253} & {0.214} & {0.257} & {0.211} & {0.254}  \\
    & 336 & {0.286} & {0.306} & {0.273} & {0.298} & {0.276} & {0.300} & {0.271} & {0.298}  \\
    & 720 & {0.377} & {0.362} & {0.354} & {0.352} & {0.362} & {0.356} & {0.359} & {0.351}  \\
    \cmidrule(lr){2-10}
    & Avg & {0.262} & {0.285} & {0.252} & {0.280} & {0.253} & {0.280} & {0.250} & {0.278}  \\
        \bottomrule
    \end{tabular}
    \begin{tablenotes}
        \item  \tiny \textit{Note}: The length of history window is set to 96 for all baselines. \texttt{Avg} indicates the results averaged over forecasting lengths: T=96, 192, 336 and 720. $^\dagger$ marks the forecasting model trained via DistDF.
    \end{tablenotes}
\end{threeparttable}
    \end{table}

\subsection{Case study with PatchTST of varying history lengths}
\begin{table}
\centering
\caption{Varying input sequence length results on the Weather dataset.}\label{tab:vary_seq_len}
\renewcommand{\arraystretch}{1} \setlength{\tabcolsep}{10pt} \scriptsize
\centering
\renewcommand{\multirowsetup}{\centering}
\begin{tabular}{c|c|c|cc|cc|cc|cc}
    \toprule
    \multicolumn{3}{c|}{\rotatebox{0}{Models}} & \multicolumn{2}{c}{\textbf{DistDF}} & \multicolumn{2}{c|}{TimeBridge} & \multicolumn{2}{c}{\textbf{DistDF}} & \multicolumn{2}{c}{PatchTST} \\
    \cmidrule(lr){4-5} \cmidrule(lr){6-7} \cmidrule(lr){8-9} \cmidrule(lr){10-11}
    \multicolumn{3}{c|}{\rotatebox{0}{Metrics}} & MSE & MAE & MSE & MAE & MSE & MAE & MSE & MAE \\
    \midrule
    \multirow{20}{*}{\rotatebox{90}{Historical sequence length}} 

& \multirow{5}{*}{96}
& 96 & 0.164 & 0.209 & 0.168 & 0.211 & 0.179 & 0.220 & 0.189 & 0.230 \\
&& 192 & 0.212 & 0.252 & 0.214 & 0.254 & 0.222 & 0.257 & 0.228 & 0.262 \\
&& 336 & 0.270 & 0.295 & 0.273 & 0.297 & 0.278 & 0.298 & 0.288 & 0.305 \\
&& 720 & 0.348 & 0.345 & 0.353 & 0.347 & 0.354 & 0.348 & 0.362 & 0.354 \\
\cmidrule(lr){3-11}
&& Avg & 0.248 & 0.275 & 0.252 & 0.277 & 0.258 & 0.281 & 0.267 & 0.288 \\
\cmidrule(lr){2-11}

& \multirow{5}{*}{192}
& 96 & 0.160 & 0.207 & 0.163 & 0.210 & 0.157 & 0.203 & 0.163 & 0.209 \\
&& 192 & 0.202 & 0.244 & 0.205 & 0.248 & 0.202 & 0.244 & 0.207 & 0.249 \\
&& 336 & 0.260 & 0.290 & 0.259 & 0.288 & 0.258 & 0.285 & 0.268 & 0.293 \\
&& 720 & 0.335 & 0.342 & 0.338 & 0.344 & 0.335 & 0.338 & 0.511 & 0.451 \\
\cmidrule(lr){3-11}
&& Avg & 0.239 & 0.271 & 0.241 & 0.273 & 0.238 & 0.267 & 0.287 & 0.301 \\
\cmidrule(lr){2-11}

& \multirow{5}{*}{336}
& 96 & 0.155 & 0.206 & 0.156 & 0.206 & 0.153 & 0.204 & 0.158 & 0.208 \\
&& 192 & 0.198 & 0.244 & 0.199 & 0.245 & 0.200 & 0.249 & 0.235 & 0.291 \\
&& 336 & 0.245 & 0.283 & 0.259 & 0.294 & 0.250 & 0.285 & 0.252 & 0.287 \\
&& 720 & 0.325 & 0.337 & 0.323 & 0.335 & 0.323 & 0.337 & 0.326 & 0.336 \\
\cmidrule(lr){3-11}
&& Avg & 0.231 & 0.267 & 0.234 & 0.270 & 0.232 & 0.269 & 0.243 & 0.280 \\
\cmidrule(lr){2-11}

& \multirow{5}{*}{720}
& 96 & 0.147 & 0.198 & 0.148 & 0.201 & 0.149 & 0.204 & 0.153 & 0.205 \\
&& 192 & 0.197 & 0.247 & 0.203 & 0.253 & 0.196 & 0.247 & 0.205 & 0.254 \\
&& 336 & 0.240 & 0.279 & 0.239 & 0.278 & 0.247 & 0.291 & 0.248 & 0.288 \\
&& 720 & 0.319 & 0.339 & 0.329 & 0.346 & 0.313 & 0.333 & 0.317 & 0.339 \\
\cmidrule(lr){3-11}
&& Avg & 0.226 & 0.266 & 0.230 & 0.269 & 0.226 & 0.269 & 0.231 & 0.272 \\
    \bottomrule
\end{tabular}
\end{table}

Additional experimental results of varying history lengths are available in \autoref{tab:vary_seq_len}, complementing the fixed length of 96 used in the main text. The forecasting models selected include TimeBridge~\citep{liu2024timebridge} which is the recent state-of-the-art forecasting model, and PatchTST~\citep{PatchTST} which is known to require large history lengths. The results demonstrate that DistDF consistently improves both forecasting models across different history sequence lengths. 

\subsection{Random Seed Sensitivity}

\begin{table}
\centering
\caption{Experimental results ($\mathrm{mean}_{\pm\mathrm{std}}$) with varying seeds (2021-2025).}\label{tab:seed}
\renewcommand{\arraystretch}{1.2}
\setlength{\tabcolsep}{4pt}
\scriptsize
\centering
\renewcommand{\multirowsetup}{\centering}
\begin{tabular}{c|cc|cc|cc|cc}
    \toprule
    \rotatebox{0}{Dataset} & \multicolumn{4}{c|}{ECL} & \multicolumn{4}{c}{Weather} \\
    \cmidrule(lr){2-9}
    Models & \multicolumn{2}{c}{\textbf{DistDF}} & \multicolumn{2}{c|}{DF} & \multicolumn{2}{c}{\textbf{DistDF}} & \multicolumn{2}{c}{DF} \\
    \cmidrule(lr){2-3} \cmidrule(lr){4-5} \cmidrule(lr){6-7} \cmidrule(lr){8-9}
    Metrics & MSE & MAE & MSE & MAE & MSE & MAE & MSE & MAE \\
    \midrule

96 & 0.138$_{\pm 0.001}$ & 0.236$_{\pm 0.001}$ & 0.141$_{\pm 0.001}$ & 0.239$_{\pm 0.001}$ & 0.167$_{\pm 0.003}$ & 0.209$_{\pm 0.001}$ & 0.169$_{\pm 0.001}$ & 0.212$_{\pm 0.001}$ \\
192 & 0.159$_{\pm 0.001}$ & 0.257$_{\pm 0.001}$ & 0.161$_{\pm 0.000}$ & 0.258$_{\pm 0.001}$ & 0.213$_{\pm 0.001}$ & 0.253$_{\pm 0.001}$ & 0.215$_{\pm 0.001}$ & 0.254$_{\pm 0.001}$ \\
336 & 0.179$_{\pm 0.001}$ & 0.272$_{\pm 0.001}$ & 0.183$_{\pm 0.002}$ & 0.279$_{\pm 0.002}$ & 0.271$_{\pm 0.002}$ & 0.296$_{\pm 0.002}$ & 0.272$_{\pm 0.001}$ & 0.296$_{\pm 0.001}$ \\
720 & 0.210$_{\pm 0.001}$ & 0.301$_{\pm 0.001}$ & 0.221$_{\pm 0.005}$ & 0.311$_{\pm 0.004}$ & 0.349$_{\pm 0.002}$ & 0.347$_{\pm 0.002}$ & 0.352$_{\pm 0.002}$ & 0.348$_{\pm 0.001}$ \\
\cmidrule(lr){1-9}
Avg & 0.172$_{\pm 0.000}$ & 0.266$_{\pm 0.001}$ & 0.177$_{\pm 0.002}$ & 0.272$_{\pm 0.001}$ & 0.250$_{\pm 0.001}$ & 0.276$_{\pm 0.001}$ & 0.252$_{\pm 0.001}$ & 0.277$_{\pm 0.000}$ \\
    \bottomrule
\end{tabular}
\end{table}

Additional experimental results of random seed sensitivity are available in  \autoref{tab:seed}, where we report the mean and standard deviation of results obtained from experiments conducted with five different random seeds (2021, 2022, 2023, 2024, and 2025). The results indicate minimal sensitivity of the proposed method to random initialization, as most averaged standard deviations remain below 0.005.

\end{document}